\documentclass[accepted]{uai2022} 

\usepackage[american]{babel}

\usepackage{natbib} 
    \bibliographystyle{plainnat}
    

\usepackage{graphicx}

\usepackage{float,epstopdf}
\usepackage{bbm}

\usepackage{microtype}

\usepackage{subcaption}
\usepackage{booktabs}

\usepackage{amsmath}
\usepackage{amssymb}
\usepackage{mathtools}
\usepackage{amsthm}
\usepackage{dsfont}
\usepackage{multicol}
\usepackage{multirow} 
\usepackage{amsfonts} 
\usepackage{mathrsfs}
\usepackage{fancyhdr}
\usepackage[amssymb, thickqspace]{SIunits}
\usepackage{enumitem}
\usepackage{pgfplotstable}
\usepackage{arydshln}

\usepackage{cases}

\usepackage{algorithm}
\usepackage{algorithmic}

\usepackage{url}

\usepackage{xr}

\usepackage[textsize=tiny]{todonotes}




\makeatletter
\def\munderbar#1{\underline{\sbox\tw@{$#1$}\dp\tw@\z@\box\tw@}}
\makeatother

\newtheorem{definition}{Definition}[section]
\newtheorem{theorem}[definition]{Theorem}
\newtheorem{lemma}[definition]{Lemma}
\newtheorem{remark}[definition]{Remark}

\newtheorem{proposition}[definition]{Proposition}


\newcommand{\be}{\begin{equation}}
\newcommand{\ee}{\end{equation}}
\newcommand{\bea}{\begin{equation*}\begin{aligned}}
\newcommand{\eea}{\end{aligned}\end{equation*}}

\newcommand{\R}{\mathbb{R}}

\newcommand{\Max}{\max\limits_}
\newcommand{\Min}{\min\limits_}
\newcommand{\Sup}{\sup\limits_}
\newcommand{\Inf}{\inf\limits_}
\newcommand{\Tr}[1]{\Trace \big[ #1 \big]}

\newcommand{\wh}{\widehat}
\newcommand{\mc}{\mathcal}
\newcommand{\mbb}{\mathbb}

\newcommand{\PP}{\mbb P}
\newcommand{\Pnom}{\wh{\mbb P}}
\newcommand{\QQ}{\mbb Q}

\DeclareMathOperator{\Trace}{Tr}
\DeclareMathOperator{\diag}{diag}

\DeclareMathOperator{\st}{s.t.}

\DeclareMathOperator{\Proj}{Proj}

\newcommand{\PSD}{\mathbb{S}_{+}} 
\newcommand{\PDsigma}{\mathbb{S}_{\ge \sigma}}
\newcommand{\Let}{\triangleq}
\newcommand{\opt}{^\star}
\newcommand{\eps}{\varepsilon}

\newcommand{\Wass}{\mathds{W}}

\newcommand{\x}{x}

\newcommand{\half}{\frac{1}{2}}

\newcommand{\m}{\mu}
\newcommand{\cov}{\Sigma}
\newcommand{\msa}{\wh \m}
\newcommand{\covsa}{\wh \cov}
\newcommand{\bayes}{\mathrm{Bayes}}

\makeatletter
\newcommand*{\addFileDependency}[1]{
  \typeout{(#1)}
  \@addtofilelist{#1}
  \IfFileExists{#1}{}{\typeout{No file #1.}}
}
\makeatother

\newcommand*{\myexternaldocument}[1]{%
    \externaldocument{#1}%
    \addFileDependency{#1.tex}%
    \addFileDependency{#1.aux}%
}

\setlength{\bibsep}{1.4ex}

\myexternaldocument{nguyen_12-supp}

\title{Robust Bayesian Recourse}

%
%
\author[1]{Tuan-Duy H. Nguyen}
\author[1]{Ngoc Bui}
\author[1]{Duy Nguyen}
\author[2]{Man-Chung Yue}
\author[1]{Viet Anh Nguyen}
\affil[1]{%
    VinAI Research, Vietnam
}
\affil[2]{%
    The University of Hong Kong
}
  
  \begin{document}
\maketitle

\begin{abstract}
    Algorithmic recourse aims to recommend an informative feedback to overturn an unfavorable machine learning decision. We introduce in this paper the Bayesian recourse, a model-agnostic recourse that minimizes the posterior probability odds ratio. Further, we present its min-max robust counterpart with the goal of hedging against future changes in the machine learning model parameters. The robust counterpart explicitly takes into account possible perturbations of the data in a Gaussian mixture ambiguity set prescribed using the optimal transport (Wasserstein) distance. We show that the resulting worst-case objective function can be decomposed into solving a series of two-dimensional optimization subproblems, and the min-max recourse finding problem is thus amenable to a gradient descent algorithm. Contrary to existing methods for generating robust recourses, the robust Bayesian recourse does not require a linear approximation step. The numerical experiment demonstrates the effectiveness of our proposed robust Bayesian recourse facing model shifts. Our code is available at \url{https://github.com/VinAIResearch/robust-bayesian-recourse}.
\end{abstract}

\section{Introduction}

Human constantly embark on multiple temporally-extended planning problems throughout the course of their lifespan, and we have several layers of means-end planning in order to achieve the desired goals. For example, to have a successful career as a machine learning researcher, an individual needs to put in persistent effort from their early education to their post-graduate studies, which may span over the course of over twenty years with numerous significant milestones to achieve. Two of these important milestones are the PhD admission and the job application, and arguably, a favorable outcome at these two milestones may propel an individual's career on a more auspicious trajectory than a negative outcome. To aid the committee to make better decisions, machine learning models are increasingly used in both university admission~\citep{ref:waters2014grade} and job hiring~\citep{ref:sami2019applicant}. A similar trend takes place in credit loan applications~\citep{ref:siddiqi2012credit}, healthcare~\citep{ref:mertes2021ganterfactual} and many others. 

The increasing reliance on and the long impact of algorithmic decisions raise significant requirements on the trustworthiness and explainability of the machine learning models. These requirements become more urgent as black-box, complex models are also gaining spotlight attraction due to their superior performance ~\citep{ref:garisch2019model}.  Post-hoc explanations, which extracts human-understandable explanations, may benefit individuals to understand machine-produced decisions~\citep{ref:kenny2021explaning}. A post-hoc method must demonstrate why unfavorable predictions are made, and possibly how an input would have been to obtain a favorable predicted outcome. If the inputs encode the characteristics of human individuals, then a possible post-hoc explanation may come in the form of a recourse. A recourse recommends the actions that an individual should take in order to receive an alternate algorithmic outcome~\citep{ref:ustun2019actionable}. Consider an applicant who is rejected for a particular job, a recourse may come in the form of personalized recommendations such as "complete a 6-month full-stack engineer internship" or "score 20 more points in the ability test", along with the promise that if the applicant successfully implement the necessary action then the algorithm will return a favorable outcome.

Several approaches has been proposed to provide recourses for machine learning models~\citep{ref:karimi2021survey, ref:steplin2021survey, ref:mishra2021aso, ref:andre2019computation, ref:pawelczyk2021carla}. \citet{ref:wachter2018counterfactual} used a gradient-based approach to find nearest counterfactual to the original instance. \citet{ref:ustun2019actionable} proposed an integer programming approach to generate actionable recourses for linear classifiers. \citet{ref:karimi2020modelagnostic} proposed a model-agnostic approach to generate nearest counterfactual explanations while \citet{ref:poyiadzi2020face} generates counterfactuals that are actionable and supported by the “feasible paths” of actions. \citet{ref:pawelczyk2020once} find a counterfactual explanation with an upper bound for the costs of counterfactual explanations under predictive multiplicity. \citet{ref:mothilal2020explaining} proposed a framework for generating
and evaluating a diverse set of counterfactual explanations based
on determinantal point processes. \citet{ref:bui2022counterfactual} proposed an uncertainty quantification tool to compute the bounds of the probability of validity of a set of counterfactual explanations and enhanced the validity of this set via a correction tool.

These aforementioned approaches all assume that the underlying machine learning models do not change over time. In practice, this assumption is easily violated as experts update the machine learning system frequently due to data distribution shifts~\citep{ref:quionero2009dataset, ref:geeta2019covariate}. As such, an individual may have accomplished all the recommended actions but the next time they apply for the job, the parameters of the model may already change and the updated model may still recommend a negative outcome. In that case, the recourse becomes useless: it is ineffective in overturning a negative prediction, it incurs cost to the applicant, and at the same time it raises substantial doubts about the recourse~\citep{ref:rawal2021algorithmic}. Following this line, a recourse is considered to be robust if it is effective at reversing the algorithmic outcome even under model shifts.

To construct a robust recourse, \citet{ref:upadhyay2021towards} proposed ROAR, a framework that leverages adversarial training to hedge against the perturbation of the model parameter. ROAR considers only linear classifiers; for \textit{non}linear classifiers, ROAR first generates a locally linear approximation of the underlying model (e.g., by using LIME~\citep{ref:ribeiro2016why}), then applies the adversarial training procedure with respect to this locally linear surrogate. However, there are multiple downsides when a locally linear model is used to approximate the nonlinear classifier. Recent works have shown that the locally linear model of LIME has some limitations with both its fidelity and robustness. LIME may not be faithful to the underlying model since it might be influenced by input features at a global scale rather than a local scale~\citep{ref:white2019measurable, ref:laugel2018defining}. At the same time, several works~\citep{ref:alvarez2018on, ref:slack2020fooling, ref:agarwal2021towards} point out that the explanations generated by LIME and other explanation methods may change significantly for nearby original inputs. Moreover, these explanations are even sensitive to the sampling distribution, and the can deliver different explanations of the same input in different simulation runs. Finally, a recourse which is robust for the linear approximation model may not necessarily be robust respective to the original nonlinear model.




\textbf{Contributions. } The goal of this paper is to formulate a model-agnostic recourse, which is also valid subject to potential future shifts of the machine learning models. Compared to existing methods such as ROAR~\citep{ref:upadhyay2021towards}, our method does not depend on the linear surrogate of the \textit{non}linear predictive model. Instead, our method looks directly into the sampled data points, and employs a Bayesian approach to generate recourses. Potential shifts of the predictive models are engendered  by ``perturbing" these data samples in an adversarial manner. We contribute concretely the followings.
\begin{itemize}[leftmargin = 5mm]
    \item In Section~\ref{sec:recourse}, we propose the notion of a Bayesian recourse, which minimizes the odds ratio between the posterior probability of negative and positive predicted outcomes. In a non-parametric setting, the likelihood can be approximated using a kernel density estimator built around the data sampled in the neighborhood of the boundary point. This results in the KDE-Bayesian recourse, which can be found by (projected) gradient descent. 
    \item In Section~\ref{sec:robust}, we propose the robust counterpart of the Bayesian recourse problem. This robustification involves smoothing the samples by an isotropic Gaussian convolution, then solving a min-max optimization problem over a Wasserstein-Gaussian mixture conditional ambiguity set. Section~\ref{sec:Wass} details our method of using the optimal transport to form the ambiguity sets on the space of Gaussian mixtures.
    \item In Section~\ref{sec:compute}, we show that the robust Bayesian recourse problem is amenable to separability and dimensionality reduction, thus the recourse can be constructed efficiently even in high dimensions. Section~\ref{sec:numerical} demonstrates that our recourse also performs competitively on both synthetic and real datasets.
\end{itemize}

\textbf{Notations. } We use $\delta_s$ to denote a Dirac measure supported on point $s$. The space of $p$-by-$p$ symmetric, positive semidefinite matrix is denoted by $\PSD^p$.

\section{Bayesian Recourse}
\label{sec:recourse}

We consider a generic covariate $X \in \mc X = \R^p$ and a binary predicted label $\hat Y \in \mc Y = \{0, 1\}$, where class 0 denotes an \textit{un}favorable outcome while class 1 denotes a favorable one. Given a pre-specified black-box classifier $\mathcal C$ and an input $x_0$ with unfavorable prediction, i.e., $\mathcal C(x_0) = 0$, the goal of algorithmic recourse is to devise an alternative $x'$ in the vicinity of $x_0$ that satisfies $\mathcal C(x') = 1$. The Bayesian recourse imposes a probabilistic viewpoint into this problem: the goal of Bayesian recourse is to devise an alternative in the vicinity of $x_0$ that has high \textit{favorable posterior probability}. In technical terms, consider the joint random vector of covariate-label $(X, \hat Y) \in \mc X \times \mc Y$, then the class posterior probability of any input $x$ can be represented by the conditional random variable $\hat Y | X = x$.\footnote{In algorithmic recourse, the random variable of interest is the predicted label $\hat Y$ induced by the classifier $\mathcal C$, not the true label $Y$ of the data-generating process. It is important to keep in mind that the (robust) Bayesian recourse is formulated with respect to the predicted label $\hat Y$.} 

\begin{definition}[Bayesian recourse] \label{def:bayes-recourse}
    Given an input $x_0$, let $\mbb X$ be a neighborhood around $x_0$. A Bayesian recourse  $x_{\bayes} \in \mbb X$ is an alternative that minimizes the Bayesian posterior odds ratio, i.e.,
    \[
         x_{\bayes} \Let \arg \Min{x \in \mbb X}~ \displaystyle \frac{\PP( \hat Y = 0 | X = \x)}{ \PP( \hat Y = 1 | X = \x)},
    \]
    for some joint distribution $\PP$ of $(X, \hat Y)$ induced by the sampling of the synthetic covariate $X$ and the synthetic predicted label $\hat Y = \mathcal C(X)$.
\end{definition}

The ratio $\PP( \hat Y = 0 | X = \x)/ \PP( \hat Y = 1 | X = \x)$ is a well-known quantity in Bayesian classification. The posterior probability odds is also a popular ratio in Bayesian statistics, and it has been applied for comparing regression hypotheses~\cite{ref:zellner1981posterior}, econometric models~\cite{ref:geweke1994bayesian}, asset pricing theories~\cite{ref:mcculloch1991bayesian} and collaborative evaluations~\cite{ref:hicks2018bayesian}.

As $x_{\bayes}$ minimizes the Bayesian posterior odds ratio, we can argue that $\PP( \hat Y = 0 | X = x_{\bayes})$ tends to be low, while $\PP( \hat Y = 1 | X = x_{\bayes})$ tends to be high. We next describe how we can solve the optimization problem to get $x_{\bayes}$. Note that the posterior probability can be calculated using the Bayes' theorem~\citep[Theorem~1.31]{ref:schervish1995theory}, and we can instead solve the fractional optimization problem
\[
    \Min{x \in \mbb X}~ \frac{\PP(\hat Y=0) \PP( X = \x | \hat Y = 0) }{ \PP(\hat Y=1) \PP( X = \x | \hat Y = 1) }. 
\]
It is now clear that to find $x_{\bayes}$, we need the marginal probability of $\hat Y$ and the likelihood of $X|\hat Y$. Suppose that we can use a sampling mechanism to sample $n$ covariates $\wh x_i$, then query the given classifier to obtain the predicted labels $\wh y_i = \mathcal C(\wh x_i)$ to form $n$ pairs $(\wh x_i, \wh y_i)$, $i = 1, \ldots, n$. Using these synthetic, labelled samples, we now can formulate the empirical version of Bayesian recourse problem. Let $\mc I_y = \{ i \in [n]: \wh y_i = y\}$ be the indices of samples in class $y \in \mc Y$. Let $N_y = | \mc I_y |$ be the number of training samples with class $y$, then we can use $\gamma_y = N_y/n$, the empirical proportion of data for class $y$, as an estimate of $\PP(\hat Y = y)$.

Next, we take the nonparametric approach to estimate the likelihood $\PP(X = x | \hat Y = y)$ using a kernel density estimator~\citep[Section~1]{ref:tsybakov2008introduction}. As a concrete example, we choose the Gaussian kernel with bandwidth $h > 0$, thus the kernel density estimate of the quantity $\PP(X = x | \hat Y = y)$ is
\[
    L_{\mathrm{KDE}}(\x | \hat Y = y) = \frac{1}{N_y } \sum_{i \in \mc I_y}  \exp\left( - \frac{1}{2h^2} \| \x - \wh x_i \|_2^2 \right).
\]
Thus, the empirical version of the Bayesian recourse, termed the KDE-Bayesian recourse, can be found by solving
\be \label{eq:KDE}
    \Min{x \in \mbb X}~ \displaystyle \frac{\gamma_0 \times L_{\mathrm{KDE}}(  \x | \hat Y = 0) }{ \gamma_1 \times L_{\mathrm{KDE}}( \x | \hat Y = 1)}.
\ee
This problem further simplifies to
\[
    \Min{x \in \mbb X}~\frac{\sum_{i \in \mc I_0}  \exp\left( - \frac{1}{2h^2} \| \x - \wh x_i \|_2^2 \right)}{\sum_{i \in \mc I_1}  \exp\left( - \frac{1}{2h^2} \| \x - \wh x_i \|_2^2 \right)}
\]
by exploiting the definition of $L_{\mathrm{KDE}}$ and $\gamma_y$. In this form, a (projected) gradient descent algorithm can be employed to find the KDE-Bayesian recourse.

There remain two elements to be specified about the formulation of the Bayesian recourse: the sampling scheme to generate covariates $\wh x_i$ and the feasible set $\mbb X$. We discuss these components in the remainder of this section.

\textbf{Sampling scheme. } The goal of the sampling scheme is to synthesize covariate data $\wh x_i$ around the boundary to obtain \textit{local} information from the black-box classifier. Toward this goal, we use a local sampling method, similar to~\citet{ref:vlassopoulos2020explaining} and~\citet{ref:laugel2018defining} as follows.
\begin{itemize}[leftmargin=5mm]
    \item Given an instance $x_0$, we choose $K$ nearest counterfactuals $x_{1}, \ldots, x_K$ from the training data that have favorable predicted outcome, that is, $\mathcal C(x_k) = 1$ for $k = 1, \ldots, K$.
    \item For each counterfactual $x_{k}$, we perform a line search to find a point $x^b_k$ that is on the decision boundary and on the line segment joining $x_{0}$ and $x_{k}$. 
    \item Among these points $x^b_k$, we choose the nearest point to $x_{0}$ by setting $x^{b} \Let \arg \min_{x^b_i} \{c(x^b_i, x_0)\}$, where $c(\cdot)$ is the cost function. We then sample $\wh x_i$ uniformly in a neighborhood determined by an $\ell_2$-ball with radius $r_p$ centered on $x^b$.
\end{itemize} 

\textbf{Feasible set $\mbb X$. } It is desirable to constrain the recourse in a \textit{strict} neighborhood of distance $\delta$ from the input~\citet{ref:venkatasubramanian2020philosophical}. Thus, we can impose a feasible set of the form
\[\mbb X = \{ x \in \mc X ~:~ \varphi(x, x_0) \le \delta \},\] 
where $\varphi$ is a measure of dissimilarity on the covariate space $\mc X$. Alternatively, if we use a boundary sampler as previously discussed, we may also opt for the constraint $\varphi(x, x^b) \le \delta'$ around the boundary point $x^b$. A good choice of $\varphi$ is the $\ell_1$ distance, which promotes sparse modifications to the input.

In order to construct plausible and meaningful recourses, we could additionally consider the actionability constraints that forbid unrealistic recourses. For example, the gender or race of a person should be considered immutable. Likewise, recourse should not suggest an individual reduce their age to achieve a favorable outcome. These constraints could be easily injected into the definition of the feasible set $\mbb X$, similar to~\citet{ref:upadhyay2021towards}. Finding the optimal actionable recourse restricted to this feasible set could be addressed effectively by a projected gradient descent algorithm~\citep{ref:mothilal2020explaining, ref:upadhyay2021towards}.

\section{Robust Bayesian Recourse} 
\label{sec:robust} 

The Bayesian recourse in Definition~\ref{def:bayes-recourse} depends on the classifier $\mc C$ as we query $\mc C$ to label the samples $\wh x_i$ via $\wh y_i = \mc C(\wh x_i)$. Thus, inherently, the recourse would possess high posterior probability of favorable outcome with respect to the \textit{present} classifier $\mc C$. Because the parameters defining $\mc C$ may be updated, the Bayesian recourse does not guarantee a high probability of favorable outcome with respect to the \textit{future} classifier $\tilde{\mc C}$. Devising a recourse that has a high probability of future favorable outcome encounters two critical difficulties: first, the classifiers $\mc C$ and $\tilde{\mc C}$ are possibly nonlinear, and second, it is nontrivial to predict the shifts in the parameters of $\tilde{\mc C}$ from the present model $\mc C$. Existing robust recourse methods such as ROAR~\citep{ref:upadhyay2021towards} need to approximate a nonlinear model by a linear model using LIME~\citep{ref:ribeiro2016why}, then robustness is represented by perturbations of the parameters of the linear surrogate.

The robust Bayesian recourse takes a completely different path to ensure robustness by removing the need for an intermediate linear surrogate model. The robust Bayesian recourse aims to perturb directly the empirical conditional distributions of $X | \hat Y = y$, which then reshapes the decision boundary in the covariate space in an adversarial manner. Holistically, our approach can be decomposed into the following steps:
\begin{enumerate}[leftmargin = 5mm]
    \item Forming the empirical conditional distributions of $X | \hat Y = y$, then smoothen them by convoluting an isotropic Gaussian noise to each data point.
    \item Formulating the ambiguity set for each conditional distributions of $X | \hat Y = y$.
    \item Solving a min-max problem to find the recourse that minimizes the worst-case Bayesian posterior odds ratio.
\end{enumerate}

We now dive into the technical specifications of the robust Bayesian recourse. Remind that the sampling procedure equips us with the samples $(\wh x_i, \wh y_i)_{i=1, \ldots, n}$, and $\mc I_y$ are indices of samples with predicted label $y$. 
Let $\Pnom_y^\sigma  = N_y^{-1} \sum_{i \in \mc I_y} \delta_{\wh x_i} * \mc N(0, \sigma^2 I)$ be the \textit{smoothed} empirical conditional distribution of $X | Y = y$, in which $*$ denotes the convolution. Notice that $\Pnom_y^\sigma$ is a mixture of Gaussian with $N_y$ components located at the covariate $\wh x_i$ with isotropic variance $\sigma^2 I$. Smoothing the empirical distribution by convoluting a noise to each sample is also attracting attention recently thanks to its possibility to quantify and enhance the robustness of machine learning models~\cite{ref:cohen2020certified}.

We assume now that the conditional distribution can be perturbed in an ambiguity set $\mbb B_{\eps_y}(\Pnom_y^\sigma)$. This set $\mbb B_{\eps_y}(\Pnom_y^\sigma)$ is defined as a neighborhood of radius $\eps_y \ge 0$ centered at the nominal distribution $\Pnom_y^\sigma$. The robust Bayesian recourse is defined as the optimal solution of the following problem
\be \label{eq:dro}
    \Min{x \in \mbb X}~ \displaystyle \Max{\QQ_0 \in \mbb B_{\eps_0}(\Pnom_0^\sigma), \QQ_1 \in \mbb B_{\eps_1}(\Pnom_1^\sigma)}~\frac{ \gamma_0 \QQ_0( X = \x) }{ \gamma_1 \QQ_1( X = \x ) }.
\ee
Notice that $\QQ_y$ is a \textit{conditional} probability measure of $X$ given $\hat Y = y$, and thus it is a measure supported on $\R^p$. The value $\QQ_y(X = x)$ is also the likelihood of $x$ under the conditional measure $\QQ_y$, thus problem~\eqref{eq:dro} can be view as a robust likelihood ratio minimization problem. Here, robustness is defined with respect to the conditional sets $\mbb B_{\eps_y}(\Pnom_y^\sigma)$ in the specific sense: the optimal value of problem~\eqref{eq:dro} constitutes a uniform upper bound of the likelihood ratio over all possible choices of conditional distributions in the sets $\mbb B_{\eps_y}(\Pnom_y^\sigma)$. Further, we have explicitly used $\gamma_y$ as an estimator of the marginal distribution of $\hat Y$ in problem~\eqref{eq:dro}.

There is an intimate relationship between the KDE-Bayesian recourse problem~\eqref{eq:KDE} and the robust Bayesian recourse problem~\eqref{eq:dro}. This relationship is established thanks to the smoothing of the empirical conditional distributions, and is highlighted in the following remark.

\begin{remark}[Recovery of the KDE-Bayesian recourse] The smoothed conditional distribution $\Pnom_y^\sigma$ is a mixture of Gaussians, and the likelihood of $x$ under $\Pnom_y^\sigma$ is
 \[
 \frac{1}{N_y (2\pi)^{\frac{p}{2}} \sigma^p} \sum_{i \in \mc I_y}  \exp\left( - \frac{1}{2\sigma^2} \| \x - \wh x_i \|_2^2 \right).
 \] 
 As a consequence, if the ambiguity sets $\mbb B_{\eps_y}(\Pnom_y^\sigma)$ collapse into singletons, that is, $\mbb B_{\eps_y}(\Pnom_y^\sigma) = \{\Pnom_y^\sigma\}$, then problem~\eqref{eq:dro} coincides with the KDE-Bayesian recourse problem~\eqref{eq:KDE}. Thus, problem~\eqref{eq:dro} can be considered as a robustification of the KDE-Bayesian recourse formulation.
\end{remark}


\section{Wasserstein-Gaussian Mixture Ambiguity Sets} \label{sec:Wass}

The central notion underlying the robust Bayesian recourse problem~\eqref{eq:dro} is the set of probability measures for the covariate $X$ conditional that $Y = y$. A suitable design of the ambiguity set $\mbb B_{\eps_y}(\Pnom_y^\sigma)$ is critical to enable an efficient resolution of problem~\eqref{eq:dro}. We here propose a novel design of the ambiguity set by merging ideas from the theory of optimal transport and Gaussian mixtures.

Note that any Gaussian distribution is fully characterized by its mean vector and its covariance matrix. As the smoothed measure $\Pnom_y^\sigma$ is a Gaussian mixture, it is associated with the discrete distribution $\wh \nu_y = N_y^{-1} \sum_{i \in \mc I_y} \delta_{(\wh x_i, \sigma^2 I)}$ on the space of mean vector and covariance matrix $\R^p \times \PSD^p$.\footnote{Associated with any mixture of Gaussians $\QQ_y$ on $\R^p$ is a probability measure $\nu_y$ on the mean-covariance space of $\R^p \times \PSD^p$ such that for any measurable set $\mc S \subseteq \R^p$
    \[
        \QQ_y(X \in \mc S) = \int_{\R^p \times \PSD^p} \int_{\mc S} f(\tilde x | \m, \cov) ~\mathrm{d} \tilde x~\nu_y(\mathrm{d} \m, \mathrm{d} \cov),
    \]
    where $f(\cdot | \m, \cov)$ is the density function of the Gaussian distribution $\mc N(\m, \cov)$.
    }
    Moreover, define the set
\[
    \PDsigma^p \Let \{ \cov \in \PSD^p: \cov \succeq \sigma^2 I\} \subset \PSD^p
\]
of covariance matrices whose eigenvalues are lower bounded by $\sigma^2 > 0$, where $\sigma^2$ is the isotropic variance of the smoothing convolution. Notice that we explicitly constrain the covariance matrices to be invertible so that the likelihood function of each Gaussian component is well-defined. For any $y \in \{0, 1\}$, we formally define the ambiguity set as
\begin{align*}
    &\mbb B_{\eps_y}(\Pnom_y^\sigma) \Let \\
    &\left\{ \QQ_y: \begin{array}{l}
    \nu_y \in \mc P(\R^p \times \PDsigma^p),~\Wass_c(\nu_y, \wh \nu_y) \le \eps_y \\
    \QQ_y \text{ is a Gaussian mixture associated with } \nu_y
    \end{array}
    \right\}.
\end{align*}
Here, $\mc P(\R^p \times \PDsigma^p)$ denotes the set of all possible distributions supported on $\R^p \times \PDsigma^p$. Intuitively, $\mbb B_{\eps_y}(\Pnom_y^\sigma)$ contains all Guassian mixtures $\QQ_y$ associated with some $\nu_y$ having a distance less than or equal to $\eps_y$ from the nominal measure $\wh \nu_y$. Thus each measure $\QQ_y$ of the random vector $X | Y = y$ is a Gaussian mixture. Each distribution $\nu_y$ is a measure on the space of mean vector-covariance matrix $\R^p \times \PDsigma^p$, and the distance between $\nu_y$ and $\wh \nu_y$ is measured by an optimal transport distance $\Wass_c$. We will use in this paper the type-$\infty$ Wasserstein distance, which is defined as follows.
\begin{definition}[Type-$\infty$ Wasserstein distance] \label{def:wass}
Let $c$ be a nonnegative, symmetric and continuous ground transport cost on $\Xi \Let \R^p \times \PDsigma^p$. The type-$\infty$ Wasserstein distance between two distributions $\nu_1,~\nu_2 \in \mc P(\Xi)$ amounts to
\begin{align*}
&\Wass_{c}(\nu_1, \nu_2) \\
\Let& \Inf{\lambda \in \Lambda(\nu_1, \nu_2)} \left\{ \mathrm{ess} \Sup{\lambda} \big\{ c(\xi_1, \xi_2) : (\xi_1, \xi_2)  \in \Xi \times \Xi \big\} 
\right\},
\end{align*}
where $\Lambda(\nu_1, \nu_2)$ is the set of all couplings of $\nu_1$ and $\nu_2$. 
\end{definition}
	
It remains to specify the ground metric $c$ on the space $\R^p \times \PDsigma^p$. Because the space $\R^p \times \PDsigma^p$ aims to model the mean vectors and the covariance matrices of Gaussian distributions, it is also natural to use a ground metric $c$ that is inspired by the Wasserstein distance between Gaussian distributions. Fortunately, the Wasserstein type-2 distance between Gaussian measures is known in closed form~\citep{ref:olkin1982distance, ref:givens1984class}.
\begin{proposition}[Wasserstein type-2 distance between Gaussian distributions]
The Wasserstein type-2 distance between two $p$-dimensional Gaussian distributions $\mc N(\m, \cov)$ and $\mc N(\msa, \covsa)$ under the Euclidean ground metric amounts to $ \mathds G (\mc N(\m, \cov), \mc N(\msa, \covsa)) = \sqrt{ \| \m - \msa \|_2^2 + \Tr{\cov + \covsa - 2 \big( \covsa^{\half} \cov \covsa^{\half} \big)^{\frac{1}{2}}}}$.
    \end{proposition}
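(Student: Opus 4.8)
The plan is to run the classical argument of Olkin--Pukelsheim and Givens--Shortt: for the quadratic transport cost the optimal coupling of two Gaussians may be taken to be jointly Gaussian, which collapses the optimal transport problem to a finite-dimensional trace maximization over admissible cross-covariance matrices. Since in our setting every covariance matrix lies in $\PDsigma^p$ and is therefore invertible, I would assume throughout that $\cov, \covsa \succ 0$, which lets me use Schur complements freely and sidesteps any limiting argument for degenerate Gaussians.

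First I would reduce the objective to the cross-moment. For any coupling $\pi$ of $\mc N(\m, \cov)$ and $\mc N(\msa, \covsa)$ with $(X, Y) \sim \pi$, expanding $\|X - Y\|_2^2 = \|X\|_2^2 - 2 X^\top Y + \|Y\|_2^2$ and using that the outer two terms depend only on the (fixed) marginals gives $\EE_\pi[\|X - Y\|_2^2] = \|\m\|_2^2 + \|\msa\|_2^2 + \Tr{\cov + \covsa} - 2\, \EE_\pi[X^\top Y]$, while $\EE_\pi[X^\top Y] = \Tr{\EE_\pi[X Y^\top]} = \m^\top \msa + \Tr{C}$, where $C$ is the cross-covariance of $\pi$. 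Hence $\mathds{G}^2(\mc N(\m,\cov),\mc N(\msa,\covsa)) = \|\m - \msa\|_2^2 + \Tr{\cov + \covsa} - 2 \sup_\pi \Tr{C}$, and everything reduces to computing $\sup_\pi \Tr{C}$.

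Next I would argue that Gaussian couplings suffice and then solve the resulting program. The objective depends on $\pi$ only through $C$, and a matrix $C$ is the cross-covariance of \emph{some} coupling if and only if the $2p \times 2p$ block matrix with diagonal blocks $\cov, \covsa$ and off-diagonal block $C$ is positive semidefinite --- necessity because it is a genuine covariance matrix, sufficiency because the Gaussian law with that covariance and mean $(\m, \msa)$ is an explicit coupling realizing it. So $\sup_\pi \Tr{C} = \max\{ \Tr{C} : \left(\begin{smallmatrix} \cov & C \\ C^\top & \covsa \end{smallmatrix}\right) \succeq 0 \}$. By the Schur complement this constraint reads $\cov \succeq C \covsa^{-1} C^\top$; substituting $C = \cov^{\half} W \covsa^{\half}$ turns it into the spectral-norm bound $\| W \| \le 1$, while $\Tr{C} = \Tr{\covsa^{\half} \cov^{\half} W} = \inner{\cov^{\half} \covsa^{\half}}{W}$. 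Hölder's inequality for Schatten norms then gives $\inner{\cov^{\half} \covsa^{\half}}{W} \le \| \cov^{\half} \covsa^{\half} \|_* \| W \| \le \| \cov^{\half} \covsa^{\half} \|_* = \Tr{(\covsa^{\half} \cov \covsa^{\half})^{\half}}$ (using $(\cov^{\half} \covsa^{\half})^\top (\cov^{\half} \covsa^{\half}) = \covsa^{\half} \cov \covsa^{\half}$), and taking $W = U V^\top$ from an SVD of $\cov^{\half} \covsa^{\half}$ attains equality, since the induced $C$ makes the Schur complement vanish and is thus feasible. Plugging $\sup_\pi \Tr{C} = \Tr{(\covsa^{\half} \cov \covsa^{\half})^{\half}}$ back into the formula from the previous paragraph and taking square roots yields the claimed identity.

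The only genuinely nontrivial point is this last one --- the trace maximization, and in particular checking that the nuclear-norm upper bound is both tight \emph{and} attained by a $C$ that remains feasible for the semidefinite constraint; everything else is bookkeeping with the quadratic cost. Since the identity is well known, one could alternatively just invoke~\citep{ref:olkin1982distance, ref:givens1984class}, but I would include the short argument above to keep the paper self-contained.
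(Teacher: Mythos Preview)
Your argument is correct and is precisely the classical Olkin--Pukelsheim / Givens--Shortt derivation: reduce the squared cost to a cross-covariance trace maximization, characterize admissible cross-covariances via the block positive-semidefinite constraint, and resolve the resulting problem by the Schur-complement substitution and the Schatten--H\"older bound, with tightness via the polar/SVD factor. Nothing is missing, and your remark that $\cov,\covsa\succ 0$ (which holds in the paper's regime since every covariance lies in $\PDsigma^p$) is exactly what lets the Schur complement step go through cleanly.

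The only point of comparison is that the paper does \emph{not} supply its own proof of this proposition at all: it is stated as a known closed-form result and simply attributed to \citet{ref:olkin1982distance} and \citet{ref:givens1984class}. Your write-up is therefore strictly more than what the paper does; it reconstructs the argument from those references rather than taking a different route from the paper. Your closing suggestion --- that one could alternatively just cite the literature --- is in fact exactly what the paper chose to do.
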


Motivated by the above result, we endow the space $\R^p \times \PDsigma^p$ with the cost function $c$ defined as
    \begin{align*} 
        &c((\m, \cov), (\msa, \covsa))  \\
        &\hspace{1cm} \Let\sqrt{ \| \m - \msa \|_2^2 + \Tr{\cov + \covsa - 2 \big( \covsa^{\half} \cov \covsa^{\half} \big)^{\frac{1}{2}}} }.
    \end{align*}
It is easy to see that $c$ is non-negative, symmetric and continuous on $\R^p \times \PDsigma^p$ and thus $c$ is a valid ground cost for the Wasserstein distance $\Wass_c$ on $\R^p \times \PDsigma^p$.
We should point out that the Wasserstein distance has also been heavily used to construct ambiguity sets in the context of distributionally robust machine learning ~\cite{ref:nguyen2019optimistic, taskesen2021sequential, vu2022distributionally}.
Our formulation of $\Wass_c$ is related with the family of optimal transport for Gaussian mixtures, which we discuss in the next remark.

\begin{remark}[OT between Gaussian mixtures]
    Our construction relies on representing a Gaussian mixture distribution as a discrete distribution on the mean vector and covariance matrix space. This construction is motivated by recent work on optimal transport between Gaussian mixtures in~\citet{ref:chen2019optimal} and \citet{ref:delon2020wasserstein}. A clear distinction is that we use $\Wass_c$ as the type-$\infty$ distance in Definition~\ref{def:wass}, while the existing literature focuses on type-1 and type-2 distance. As we later demonstrate in Lemma~\ref{lemma:separability}, the type-$\infty$ construction is critical for the separability of the resulting problem.
\end{remark}

\section{Computation}
\label{sec:compute}

In this section, we delineate the solution procedure to find a robust Bayesian recourse with the Wasserstein-Gaussian mixture ambiguity sets formalized in Section~\ref{sec:Wass}. Fix any measure $\QQ_y \in \mbb B_{\eps_y}(\Pnom_y^\sigma)$, then $X | Y = y$ follows a mixture of Gaussian under $\QQ_y$, and we let $L(x, \QQ_y)$ be the Gaussian mixture likelihood of a point $x$ under $\QQ_y$. By internalizing the maximization term inside the fraction and replacing $\QQ_y(X = x)$ by the likelihood $L(x, \QQ_y)$, problem~\eqref{eq:dro} is equivalent to
\[
    \Min{x \in \mbb X}~ F(x), \quad F(x) \Let \displaystyle \frac{\gamma_0 \times \Max{\QQ_0 \in \mbb B_{\eps_0}(\Pnom_0^\sigma)}~L(\x, \QQ_0) }{ \gamma_1 \times \Min{\QQ_1 \in \mbb B_{\eps_1}(\Pnom_1^\sigma)} L(\x, \QQ_1) }.
\]
In the sequence, we discuss how to evaluate the objective value $F(x)$, sketch the necessary proof and provide further insights to the likelihood evaluation problems.

\subsection{Reformulations of the Likelihood Evaluation Problems and Routines} \label{sec:refor}

For any $x \in \mbb X$, evaluating its objective value $F(x)$ requires solving the maximization of the likelihood in the numerator
\be \label{eq:likelihood-max}
    \max~\{ L(\x, \QQ_0) : \QQ_0 \in \mbb B_{\eps_0}(\Pnom_0^\sigma)\}
\ee
and the minimization of the likelihood in the denominator
\be \label{eq:likelihood-min}
    \min~\{ L(\x, \QQ_1) : \QQ_1 \in \mbb B_{\eps_1}(\Pnom_1^\sigma)\}.
\ee

At this stage, it is important to relate problems~\eqref{eq:likelihood-max} and~\eqref{eq:likelihood-min} to the existing literature on (Bayesian) likelihood estimation/approximation. Problem~\eqref{eq:likelihood-max} searches for a distribution that \textit{maximizes} the likelihood of $x$ over the set $\mbb B_{\eps_0}(\Pnom_0^\sigma)$, and it is also known in the machine learning literature as an \textit{optimistic} likelihood~\cite{ref:nguyen2019calculating, ref:nguyen2020robust}. There is, however, a clear distinction between the existing results and the results of this paper: \citet{ref:nguyen2019calculating} use a Gaussian feasible set prescribed using the Fisher-Rao distance and \citet{ref:nguyen2020robust} use a moment-based feasible set using the Kullback-Leibler type divergence; in contrast, our set~$\mbb B_{\eps_0}(\Pnom_0^\sigma)$ is a mixture of Gaussian feasible set prescribed using a hierarchical Wasserstein distance. The attractiveness of the existing optimistic likelihood methods lies in their computational tractability. Next, we show that our optimistic likelihood under the Wasserstein-Gaussian mixture ambiguity set also possesses this tractability.

\begin{theorem}[Optimistic likelihood] \label{thm:max}
    For each $i \in \mc I_0$, let $\alpha_i$ be the optimal value of the following two-dimensional optimization problem
     \[
    \min_{\substack{a \in \R_+,~d_p \in [\sigma, +\infty) \\ a^2 + (d_p - \sigma)^2 \le \eps_0^2 }} ~ \log d_p + \frac{(\|\x - \wh x_i\|_2 - a)^2}{2d_p^2}  + (p-1) \log \sigma.
    \]
    Then, we have
    \[
    \max~\{ L(\x, \QQ_0) : \QQ_0 \in \mbb B_{\eps_0}(\Pnom_0^\sigma)\} = \frac{\sum_{i \in \mc I_0} \exp(-\alpha_i)}{N_0 (2\pi)^{p/2}} .
    \]
\end{theorem}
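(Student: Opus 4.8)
The plan is to prove the formula in two stages: first reduce the optimistic likelihood over the Wasserstein--Gaussian mixture ball to a sum of $N_0$ independent ``per-component'' problems, and then solve each of those problems exactly. Writing $\Xi\Let\R^p\times\PDsigma^p$, letting $f(\cdot\,|\,\m,\cov)$ denote the $\mc N(\m,\cov)$ density, and setting $\B_i\Let\{(\m,\cov)\in\Xi:\ c((\wh x_i,\sigma^2 I),(\m,\cov))\le\eps_0\}$, the two targets are the identity $\max\{L(\x,\QQ_0):\QQ_0\in\mbb B_{\eps_0}(\Pnom_0^\sigma)\}=N_0^{-1}\sum_{i\in\mc I_0}\sup_{\B_i} f(\x\,|\,\m,\cov)$ and, for each $i$, the evaluation $\sup_{\B_i} f(\x\,|\,\m,\cov)=(2\pi)^{-p/2}\exp(-\alpha_i)$.

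For the decoupling, recall that $\QQ_0\in\mbb B_{\eps_0}(\Pnom_0^\sigma)$ means $\QQ_0$ is the Gaussian mixture associated with some $\nu_0\in\mc P(\Xi)$ with $\Wass_c(\nu_0,\wh\nu_0)\le\eps_0$, and by the mixture-likelihood identity stated in Section~\ref{sec:Wass} we have $L(\x,\QQ_0)=\int_\Xi f(\x\,|\,\m,\cov)\,\nu_0(\dd\m,\dd\cov)$. Because $\Wass_c$ is the type-$\infty$ distance of Definition~\ref{def:wass}, the constraint $\Wass_c(\nu_0,\wh\nu_0)\le\eps_0$ is equivalent (directly if one invokes existence of optimal $\infty$-transport plans, or otherwise via an $\eps_0+\tau$, $\tau\downarrow0$ approximation using continuity of $f$ and of the balls $\B_i$ in their radius) to the existence of a coupling $\lambda$ of $\wh\nu_0$ and $\nu_0$ whose ground cost $c$ is $\lambda$-essentially bounded by $\eps_0$. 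Disintegrating $\lambda$ over the finitely supported $\wh\nu_0=N_0^{-1}\sum_{i\in\mc I_0}\delta_{(\wh x_i,\sigma^2 I)}$ gives $\lambda=N_0^{-1}\sum_{i\in\mc I_0}\delta_{(\wh x_i,\sigma^2 I)}\otimes\kappa_i$ with each stochastic kernel $\kappa_i$ supported on $\B_i$ and $\nu_0=N_0^{-1}\sum_{i}\kappa_i$; hence $L(\x,\QQ_0)=N_0^{-1}\sum_i\int f(\x\,|\,\cdot)\,\dd\kappa_i\le N_0^{-1}\sum_i\sup_{\B_i} f(\x\,|\,\cdot)$, with equality attained by taking each $\kappa_i$ to be a Dirac at a maximizer of $f(\x\,|\,\cdot)$ over the compact set $\B_i$ (closedness is clear, and the constraint forces $\cov^{1/2}$ into a bounded set while $\cov\succeq\sigma^2 I$ keeps $f$ bounded and continuous). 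The corresponding mixture lies in $\mbb B_{\eps_0}(\Pnom_0^\sigma)$, so the outer maximum is indeed attained.

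For the per-component subproblem I would substitute $S=\cov^{1/2}\succeq\sigma I$; using $\covsa=\sigma^2 I$ one has $\Tr{\cov+\sigma^2 I-2\sigma\cov^{1/2}}=\Tr{(S-\sigma I)^2}$, so the constraint of $\B_i$ becomes $\|\m-\wh x_i\|_2^2+\Tr{(S-\sigma I)^2}\le\eps_0^2$, while $-\log f(\x\,|\,\m,\cov)=\tfrac p2\log(2\pi)+\log\det S+\tfrac12(\x-\m)^\top S^{-2}(\x-\m)$. The dimension reduction then rests on four elementary bounds, with $d_p\Let\lambda_{\max}(S)\ge\sigma$ and $a\Let\|\m-\wh x_i\|_2$: $\log\det S\ge\log d_p+(p-1)\log\sigma$ since every eigenvalue of $S$ is at least $\sigma$; $(\x-\m)^\top S^{-2}(\x-\m)\ge\|\x-\m\|_2^2/d_p^2$; $\Tr{(S-\sigma I)^2}\ge(d_p-\sigma)^2$; and $\|\x-\m\|_2\ge\big|\,\|\x-\wh x_i\|_2-a\,\big|$ by the reverse triangle inequality. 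Combining these shows that $(a,d_p)$ is feasible for the stated two-dimensional program and that $-\log f(\x\,|\,\m,\cov)\ge\tfrac p2\log(2\pi)+\alpha_i$, i.e. $\sup_{\B_i} f\le(2\pi)^{-p/2}e^{-\alpha_i}$; for the reverse inequality I would take an optimizer $(a^\star,d_p^\star)$ of the two-dimensional program (attained, its feasible set being compact and objective continuous), note $a^\star\le\|\x-\wh x_i\|_2$ at optimality, and check that $\m^\star=\wh x_i+a^\star u$ and $\cov^\star=(d_p^\star)^2 uu^\top+\sigma^2(I-uu^\top)$, with $u$ a unit vector along $\x-\wh x_i$ (arbitrary if $\x=\wh x_i$), lie in $\B_i$ and attain $-\log f(\x\,|\,\m^\star,\cov^\star)=\tfrac p2\log(2\pi)+\alpha_i$. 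Substituting $\sup_{\B_i} f=(2\pi)^{-p/2}e^{-\alpha_i}$ into the decoupling identity yields $\max\{L(\x,\QQ_0):\QQ_0\in\mbb B_{\eps_0}(\Pnom_0^\sigma)\}=N_0^{-1}(2\pi)^{-p/2}\sum_{i\in\mc I_0}e^{-\alpha_i}$. I expect the main obstacle to be the first stage: making rigorous the passage from the type-$\infty$ Wasserstein ball to the ``product of $c$-balls'' description (attainment or $\tau$-approximation of optimal plans, and the disintegration over the atoms of $\wh\nu_0$), together with the measurability and compactness checks needed to attain the various suprema; once the $\lambda_{\max}$-bounds and the explicit spike covariance $\cov^\star$ are in hand, the per-component reduction is short.
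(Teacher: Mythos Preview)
Your proposal is correct and follows essentially the same two-stage architecture as the paper: first the separability reduction (the paper's Lemma~\ref{lemma:separability}), then the per-component dimensionality reduction (the paper's Proposition~\ref{prop:max-Wass}).

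The one point worth flagging is a stylistic difference in the second stage. The paper reparametrizes $\cov=V\diag(d^2)V^\top$, introduces the auxiliary variable $a=\|\m-\wh x_i\|_2$, and then solves a chain of nested optimizations (over $V$ via Lemma~\ref{lemma:optimal_V}, over $\m$ via Lemma~\ref{lemma:quadratic}, and finally over $d_1,\ldots,d_{p-1}$ by monotonicity of $\log$). Your route is instead a direct ``bound and achieve'': you lower-bound $-\log f$ using the four eigenvalue/triangle inequalities and then exhibit the rank-one spike covariance $\cov^\star=(d_p^\star)^2 uu^\top+\sigma^2(I-uu^\top)$ that makes every inequality tight. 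These are two presentations of the same underlying structure (the optimal $\cov$ has one eigenvalue $d_p$ aligned with $x-\m$ and the rest at $\sigma$), and your version is arguably a bit cleaner since it avoids the explicit $\mathrm{O}(p)$ parametrization; nothing is lost or gained mathematically.
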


Theorem~\ref{thm:max} asserts that we can solve problem~\eqref{eq:likelihood-max} by solving $N_0$ individual subproblems, each subproblem is a two-dimensional minimization problem. Notice that the feasible set of each subproblem is relatively simple: it contains an ellipsoidal constraint and lower bounds on the variables. Hence, it is easy to devise a projection operator for this feasible set. Note that the objective function of the subproblem is non-convex.

Let us now focus our attention on problem~\eqref{eq:likelihood-min}: it searches for a distribution that \textit{minimizes} the likelihood of $x$ over all candidate distributions in $\mbb B_{\eps_1}(\Pnom_1^\sigma)$, and it is termed the \textit{pessimistic} likelihood. It has been previously noticed that the pessimistic likelihood is not easy to solve due to non-convexity~\cite[Appendix~A]{ref:nguyen2020robust}. Surprisingly, for our Wasserstein-Gaussian mixture set, we still can obtain the reformulation below.

\begin{theorem}[Pessimistic likelihood] \label{thm:min}
    For each $i \in \mc I_1$, let $\alpha_i$ be the optimal value of the following two-dimensional optimization problem
     \begin{align*}
    &\min_{\substack{a \in \R_+,~d_1 \in [\sigma, +\infty) \\ a^2 + p(d_1 - \sigma)^2 \le \eps_1^2}} ~ \left\{-\log d_1 - \frac{(\|\x - \wh x_i\|_2 + a)^2}{2d_1^2} \right. \\
    & \hspace{1.2cm} \left. - (p-1) \log \left(\sigma + \sqrt{\frac{\eps_1^2 - a^2 - (d_1 - \sigma)^2}{p - 1}}\right) \right\}.
    \end{align*}
    Then, we have
    \[
    \min~\{ L(\x, \QQ_1) : \QQ_1 \in \mbb B_{\eps_1}(\Pnom_1^\sigma)\} = \frac{\sum_{i \in \mc I_1} \exp(\alpha_i)}{N_1 (2\pi)^{p/2}} .
    \]
\end{theorem}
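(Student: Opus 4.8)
The plan is to reduce the infinite-dimensional worst-case problem~\eqref{eq:likelihood-min} to a finite family of finite-dimensional problems over individual Gaussian components, and then to solve each component problem by a spectral argument combined with elementary calculus.

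\textbf{Step 1: decomposition over components.} Since $\wh\nu_1 = N_1^{-1}\sum_{i\in\mc I_1}\delta_{(\wh x_i,\sigma^2 I)}$ is a uniform discrete measure, every coupling $\lambda$ of a candidate $\nu_1$ with $\wh\nu_1$ amounts to a choice, for each $i\in\mc I_1$, of a probability measure $\lambda_i$ on $\Xi = \R^p\times\PDsigma^p$ with $\nu_1 = N_1^{-1}\sum_{i\in\mc I_1}\lambda_i$ (disintegrate $\lambda$ with respect to its second marginal $\wh\nu_1$). The type-$\infty$ constraint $\Wass_c(\nu_1,\wh\nu_1)\le\eps_1$ holds precisely when these $\lambda_i$ can be chosen so that each is supported in the closed $c$-ball $B_i \Let \{(\m,\cov)\in\Xi : c((\m,\cov),(\wh x_i,\sigma^2 I))\le\eps_1\}$, since $\mathrm{ess\,sup}_\lambda c \le \eps_1$ just says $c\le\eps_1$ $\lambda$-a.e. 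Because $L(x,\QQ_1) = \int f(x\mid\m,\cov)\,\nu_1(\dd\m,\dd\cov)$ is linear in $\nu_1$, the minimization separates into
\[
    \min_{\QQ_1\in\mbb B_{\eps_1}(\Pnom_1^\sigma)} L(x,\QQ_1) = \frac{1}{N_1}\sum_{i\in\mc I_1}\ \min_{(\m,\cov)\in B_i} f(x\mid\m,\cov),
\]
each inner minimum being attained at a Dirac mass, and attained at all since $B_i$ is compact (Step 2 shows $c(\cdot,(\wh x_i,\sigma^2 I))\le\eps_1$ confines every eigenvalue of $\cov$ to $[\sigma^2,(\sigma+\eps_1)^2]$ and forces $\|\m-\wh x_i\|_2\le\eps_1$). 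This is exactly where the type-$\infty$ metric is essential: a type-1 or type-2 ball would not let the transport budget be dispersed independently across the atoms, so the problem would not separate (cf.\ Lemma~\ref{lemma:separability}).

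\textbf{Steps 2 and 3: one component, then reduction to two scalars.} I would write $\cov = U\Diag(d_1^2,\dots,d_p^2)U^\top$ with $U$ orthogonal, $d_j\ge\sigma$, and set $v = x-\m$. Since $\sigma I$ commutes with $\cov$, the ground cost collapses to $c((\m,\cov),(\wh x_i,\sigma^2 I))^2 = \|\m-\wh x_i\|_2^2 + \Tr{\cov + \sigma^2 I - 2\sigma\cov^{1/2}} = \|\m-\wh x_i\|_2^2 + \sum_{j=1}^p(d_j-\sigma)^2$, and minimizing $f(x\mid\m,\cov)$ is the same as maximizing $\sum_{j=1}^p\log d_j + \tfrac12\sum_{j=1}^p \langle v,u_j\rangle^2/d_j^2$ over $U$, over $(d_j)$ with $d_j\ge\sigma$, and over $\m$, subject to $\|\m-\wh x_i\|_2^2 + \sum_j (d_j-\sigma)^2\le\eps_1^2$. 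For a fixed spectrum and fixed $\|v\|_2$ the quadratic term is largest when $v$ lies along the eigenvector of the smallest eigenvalue; labelling that eigenvalue $d_1^2$, the quadratic term becomes $\|v\|_2^2/(2d_1^2)$, while $d_2,\dots,d_p$ enter only through $\sum_{j\ge2}\log d_j$ and the budget. Also $\|v\|_2 = \|x-\m\|_2$ is maximized under $\|\m-\wh x_i\|_2 = a$ by putting $\m$ on the ray from $x$ through $\wh x_i$, giving $\|v\|_2 = \|x-\wh x_i\|_2 + a$. Maximizing $\sum_{j\ge2}\log d_j$ under $\sum_{j\ge2}(d_j-\sigma)^2\le\eps_1^2 - a^2 - (d_1-\sigma)^2$ is a symmetric concave program, so by symmetrization it is optimal to take $d_2=\dots=d_p =: d$, and $d_1\le d$ at the optimum (otherwise swapping $d_1$ with some $d_j$ strictly increases the quadratic term). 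This leaves a problem in $(a,d_1,d)$; the objective is strictly increasing in $a$ and in $d$, so the budget is tight and $d = \sigma + \sqrt{(\eps_1^2 - a^2 - (d_1-\sigma)^2)/(p-1)}$. Substituting, the residual constraint $d_1\le d$ becomes exactly $a^2 + p(d_1-\sigma)^2\le\eps_1^2$ (which also keeps the radicand nonnegative), and negating the resulting maximization over $(a,d_1)$ produces precisely the quantity $\alpha_i$. Hence $\min_{(\m,\cov)\in B_i} f(x\mid\m,\cov) = (2\pi)^{-p/2}\exp(\alpha_i)$, and summing over $i\in\mc I_1$ and dividing by $N_1$ from Step~1 yields the stated formula; the case $p=1$ is trivial since the $(p-1)\log(\cdots)$ term vanishes.

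\textbf{Main obstacle.} I expect the crux to be Step~1 — rigorously justifying that the worst-case distribution is a component-wise perturbation (decomposition of the type-$\infty$ ball over a discrete nominal measure into independent $c$-balls around the atoms, optimality of Dirac masses for the inner linear problem, and attainment via compactness of $B_i$) — together with the symmetrization and swap arguments in Step~3 that pin down the ``one distinguished eigenvalue, the rest equal'' structure of the worst-case covariance and the optimal alignment of $x-\m$. Once that structure is in hand, eliminating $d$ and rewriting $d_1\le d$ as $a^2 + p(d_1-\sigma)^2\le\eps_1^2$ is routine algebra.
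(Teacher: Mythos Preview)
Your proposal is correct and takes essentially the same approach as the paper: separability over the atoms via the type-$\infty$ structure (Lemma~\ref{lemma:separability}), then per-component spectral reparametrization with alignment of $x-\m$ along the smallest-eigenvalue direction (Lemma~\ref{lemma:optimal_V}), optimal placement of $\m$ on the ray through $\wh x_i$ (Lemma~\ref{lemma:quadratic}), symmetrization of the remaining eigenvalues (Proposition~\ref{prop:max-e}), and elimination of $d$ by budget tightness. The only cosmetic difference is that you justify $d_1\le d$ via a swap argument, whereas the paper drops the constraint $d_j\ge d_1$ in the inner problem and then verifies a posteriori that the unconstrained symmetrized optimum satisfies it precisely because of the outer constraint $a^2+p(d_1-\sigma)^2\le\eps_1^2$.
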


Theorem~\ref{thm:min} asserts that the pessimistic likelihood problem~\eqref{eq:likelihood-min} admits a similar decomposable structure: solving~\eqref{eq:likelihood-min} is equivalent to solving $N_1$ individual subproblems, each subproblem is a two-dimensional minimization problem with a non-convex objective function. Further, the feasible set of the subproblem is also of tractable form for projection.

\textbf{Numerical routines.} Equipped with Theorems~\ref{thm:max} and~\ref{thm:min}, we can design an iterative scheme to solve the robust Bayesian recourse problem. For any value $x \in \mbb X$, we can use a projected gradient descent to solve a series of two-dimensional subproblems to evaluate the objective value $F(x)$. In Appendix~\ref{sec:app:foa}, we elaborate on the construction of the projection operator as well as the algorithm to evaluate $F(x)$. To optimize $F(x)$ to find the robust recourse, we can also apply a similar algorithm, provided that the projection onto the feasible region $\mbb X$ is easy to solve.

\begin{figure}
    \centering
    \includegraphics[width=0.8\linewidth]{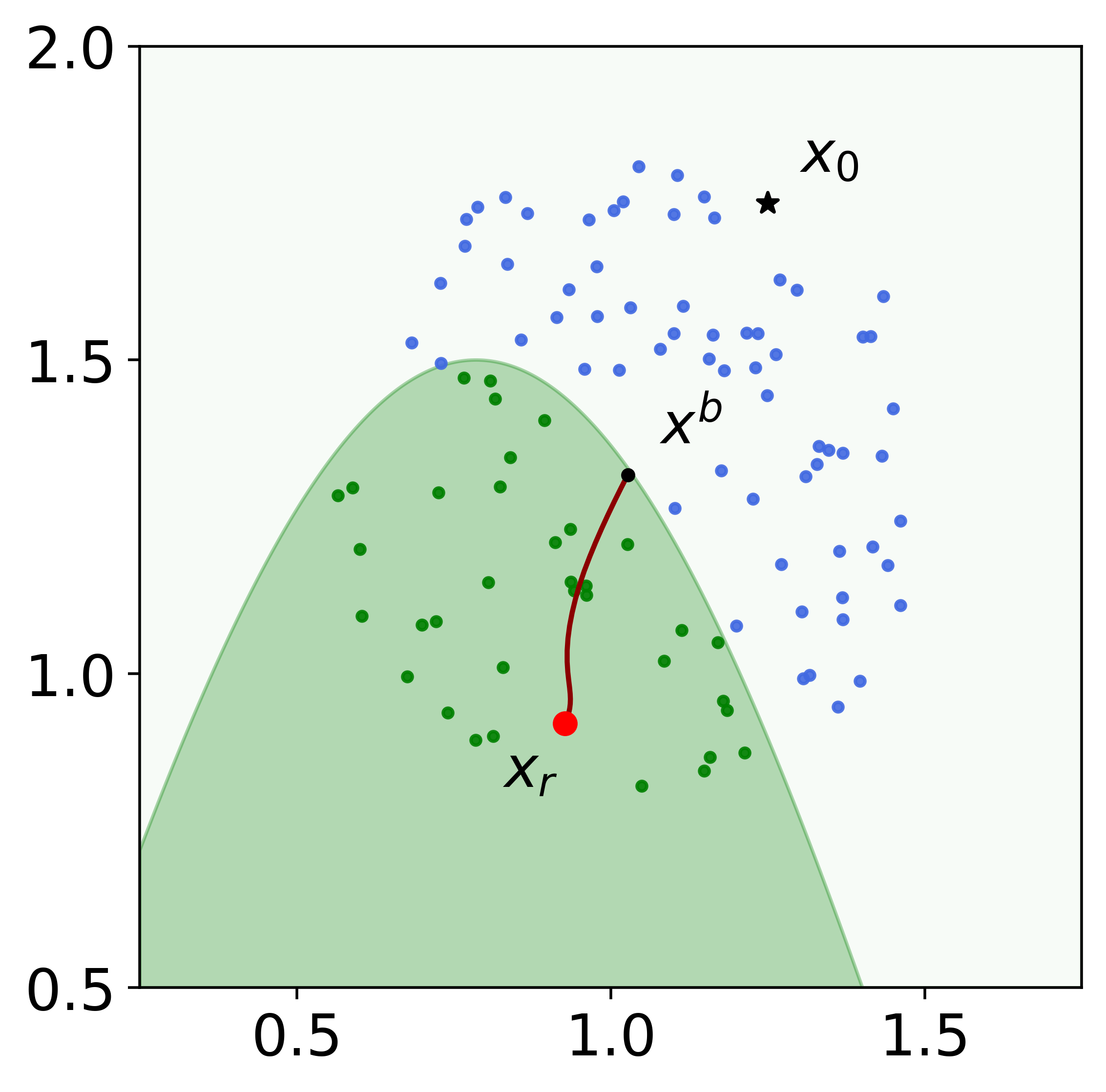}
    \caption{An example of the robust Bayesian recourse on a toy 2-dimensional instance. The star denotes the input $x_0$, and the black circle denotes the boundary point $x^b$. Green and blue circles are locally sampled data with favorable and unfavorable predicted values, respectively. The red circle denotes the robust Bayesian recourse, and the curved line denotes the continuum of intermediate solutions of the gradient descent algorithm. The robust Bayesian recourse moves to the interior of the favorable region (green), and thus is more likely to be valid subject to model shifts.}
    \label{fig:illustration}
    \vspace{-5mm}
\end{figure}

\subsection{Sketch of Proofs}	
	
We sketch here the main steps leading to the results in Section~\ref{sec:refor}. Because $\Pnom_y^\sigma$ is a Gaussian mixture and we are using a type-$\infty$ Wasserstein distance to prescribe the neighborhood around the representable distribution, the likelihood evaluation problems admit a decomposable structure. This decomposability has also been exploited previously in the literature of operations management~\citep{ref:bertsimas2021two}, chance constrained programming~\citep{ref:xie2020tractable} and fair classification~\citep{ref:wang2021wasserstein}. In the sequel, we denote $f(x | \m_i, \cov_i)$ the likelihood of $x$ under the $p$-dimensional Gaussian distribution with a mean vector $\m_i$ and a covariance matrix $\cov_i$:
\[
    f(x | \m_i, \cov_i) = \frac{\exp \big( - \half (x - \m_i)^\top \cov_i^{-1} (x - \m_i) \big)}{(2\pi)^{\frac{p}{2}} \det(\cov_i)} .
\]
The next lemma asserts that the likelihood evaluation problem can be decomposed into solving smaller subproblems, each subproblem is an optimization problem over the mean vector - covariance matrix space $\R^p \times \PDsigma^p$.

\begin{lemma}[Separability] \label{lemma:separability}
    There exists a distribution~$\QQ_0\opt$ that solves~\eqref{eq:likelihood-max} and is a mixture of at most $N_0$ Gaussian components. Moreover, problem~\eqref{eq:likelihood-max} is equivalent to a separable problem of the form
    \begin{align*}
        &\max~\{ L(\x, \QQ_0) : \QQ_0 \in \mbb B_{\eps_0}(\Pnom_0^\sigma)\} \\
        =& \left\{\begin{array}{cll}
            \max &  \frac{1}{N_0} \sum_{i \in \mc I_0} f(\x | \m_i, \cov_i) \\ 
            \st & (\m_i, \cov_i) \in \R^p \times \PDsigma^p \\
            & c((\m_i, \cov_i), (\wh x_i, \sigma^2 I)) \le \eps_0 & \forall i \in \mc I_0.
        \end{array} \right.
    \end{align*}
    An analogous result holds for problem~\eqref{eq:likelihood-min} with the corresponding subscript $y=1$.
\end{lemma}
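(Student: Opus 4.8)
The plan is to leverage two structural features of problem~\eqref{eq:likelihood-max}: first, the Gaussian mixture likelihood is \emph{affine} in the mixing measure, $L(\x, \QQ_0) = \int_{\R^p \times \PDsigma^p} f(\x \mid \m, \cov)\, \nu_0(\dd\m, \dd\cov)$; and second, the nominal mixing measure $\wh\nu_0 = N_0^{-1}\sum_{i \in \mc I_0}\delta_{(\wh x_i, \sigma^2 I)}$ is discrete with $N_0$ atoms. Together with the fact that $\Wass_c$ is a \emph{type-$\infty$} distance, this will force every feasible $\nu_0$ to split into $N_0$ ``local'' pieces --- one around each atom $(\wh x_i, \sigma^2 I)$ --- which by linearity can then be optimized separately, yielding the claimed separable reformulation.

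First I would establish a localized decomposition of feasible measures. Fix any $\QQ_0 \in \mbb B_{\eps_0}(\Pnom_0^\sigma)$ with mixing measure $\nu_0$. Since $\Wass_c(\wh\nu_0, \nu_0) \le \eps_0$, pick a coupling $\lambda$ of $\wh\nu_0$ and $\nu_0$ whose $\lambda$-essential supremum of $c$ is at most $\eps_0$ (attainment of an optimal $\infty$-coupling on Polish spaces; otherwise see the obstacle below). Disintegrating $\lambda$ along its first, finitely supported marginal yields conditionals $\nu_0^i$ with $\nu_0 = N_0^{-1}\sum_{i \in \mc I_0}\nu_0^i$, and $\lambda(\{c > \eps_0\}) = 0$ forces each $\nu_0^i$ to be supported on the closed set $\mc U_i \Let \{(\m, \cov) \in \R^p \times \PDsigma^p : c((\m,\cov),(\wh x_i, \sigma^2 I)) \le \eps_0\}$. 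Next I would record that $\mc U_i$ is \emph{compact}: with $\covsa = \sigma^2 I$ the ground cost satisfies $c((\m, \cov), (\wh x_i, \sigma^2 I))^2 = \|\m - \wh x_i\|_2^2 + \Tr{(\cov^{1/2} - \sigma I)^2}$, so membership in $\mc U_i$ bounds $\|\m\|_2$ and, since $\cov \succeq \sigma^2 I$ forces every singular value of $\cov^{1/2}$ to be at least $\sigma$, it confines the eigenvalues of $\cov$ to $[\sigma^2, (\sigma+\eps_0)^2]$; hence $\det(\cov) \ge \sigma^{2p} > 0$ on $\mc U_i$ and $f(\x \mid \cdot)$ is continuous there, so there exists $(\m_i\opt, \cov_i\opt)$ attaining $\max\{ f(\x \mid \m, \cov) : (\m, \cov) \in \mc U_i \}$.

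Then I would close the argument by matching bounds. Affinity of $L$ gives $L(\x, \QQ_0) = N_0^{-1}\sum_{i \in \mc I_0}\int_{\mc U_i} f(\x \mid \m, \cov)\, \nu_0^i(\dd\m, \dd\cov) \le N_0^{-1}\sum_{i \in \mc I_0} f(\x \mid \m_i\opt, \cov_i\opt)$, an upper bound on the value of~\eqref{eq:likelihood-max}. This bound is achieved: the measure $\nu_0\opt \Let N_0^{-1}\sum_{i \in \mc I_0}\delta_{(\m_i\opt, \cov_i\opt)}$ is paired with $\wh\nu_0$ by the deterministic coupling $(\wh x_i, \sigma^2 I) \mapsto (\m_i\opt, \cov_i\opt)$, whose essential-sup cost is $\max_{i \in \mc I_0} c((\wh x_i, \sigma^2 I), (\m_i\opt, \cov_i\opt)) \le \eps_0$; hence $\Wass_c(\wh\nu_0, \nu_0\opt) \le \eps_0$, the associated Gaussian mixture $\QQ_0\opt$ lies in $\mbb B_{\eps_0}(\Pnom_0^\sigma)$, it has at most $N_0$ components, and $L(\x, \QQ_0\opt) = N_0^{-1}\sum_{i} f(\x \mid \m_i\opt, \cov_i\opt)$. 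This proves the existence claim and identifies the optimal value. Finally, the problem on the right-hand side of the lemma decouples over $i \in \mc I_0$ into $N_0$ independent maximizations, the $i$-th being $\max\{f(\x \mid \m, \cov) : (\m, \cov) \in \mc U_i\} = f(\x \mid \m_i\opt, \cov_i\opt)$, so its optimal value is $N_0^{-1}\sum_{i} f(\x \mid \m_i\opt, \cov_i\opt)$, matching the value just obtained; this is the asserted equivalence. The case of~\eqref{eq:likelihood-min} is identical after replacing every $\max$ by $\min$ throughout (the pointwise infima of $f(\x \mid \cdot)$ over the compact sets $\mc U_i$ are again attained).

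The main obstacle is the first step: turning the single constraint $\Wass_c(\wh\nu_0, \nu_0) \le \eps_0$ into the statement that each conditional piece $\nu_0^i$ sits inside the closed $\eps_0$-ball $\mc U_i$. This needs (i) a measurable disintegration of the coupling, which is routine because the first marginal is finitely supported, and (ii) care about whether an optimal $\infty$-coupling is attained or merely approached --- if attainment is not invoked, one runs the three steps above with the inflated radius $\eps_0 + \tau$ to produce $\tau$-suboptimal feasible Gaussian mixtures, and then lets $\tau \downarrow 0$, using compactness of the $\mc U_i$ to extract a limiting optimal mixture. A minor accompanying point is verifying that the pointwise extrema of $f(\x\mid\cdot)$ over $\mc U_i$ exist, which is handled by the compactness/continuity observation above.
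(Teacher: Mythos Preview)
Your proposal is correct and follows essentially the same route as the paper: disintegrate the coupling along the finitely supported marginal $\wh\nu_0$, use the type-$\infty$ constraint to localize each conditional piece inside the $c$-ball $\mc U_i$, and then replace each piece by a Dirac at the pointwise maximizer. If anything you are more careful than the paper --- you explicitly verify compactness of $\mc U_i$ and attainment of the per-component extrema, and you flag (with a clean $\tau\downarrow 0$ fix) the attainment issue for the optimal $\infty$-coupling that the paper leaves implicit.
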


Lemma~\ref{lemma:separability} leverages the essential supremum in the definition of the type-$\infty$ Wasserstein distance in Definition~\ref{def:wass} to separate the problem into subproblem for each component. This separability is \textit{not} obtainable under other types of the Wasserstein distance. 
It is important to bear in mind that each subproblem is still not easy: the objective function is neither convex nor concave in $\cov_i$. Further, we also need to evaluate both the maximization and the minimization counterparts, and tractability is difficult to be established simultaneously in both directions. Despite these difficulties, we can show that each subproblem, which is originally on the $\R^p \times \PDsigma^p$ space, can be reduced to a $2$-dimensional subproblem. This is in fact a significant reduction of dimensionality, and this reduction does not depend on the dimension $p$. First, we provide the reformulation for the maximization counterpart.

\begin{proposition}[Maximization subproblem] \label{prop:max-Wass} 
    Fix any index $i \in \mc I_0$. For any $\wh x_i \in \R^p$, $\x \in \R^p$ and $\eps_0 \in \R_+$, we have
    \[
         \frac{\exp(-\alpha_i)}{(2\pi)^{p/2}} = \left\{
            \begin{array}{cl}
            \max & f(\x| \m_i, \cov_i) \\
            \st & (\m_i, \cov_i) \in \R^p \times \PDsigma^p \\
             &c((\m_i, \cov_i), (\wh x_i, \sigma^2 I)) \le \eps_0, 
            \end{array}
         \right.
    \]
    where $\alpha_i$ is the optimal value of the two-dimensional optimization problem
     \[
        \min_{\substack{a \in \R_+,~d_p \in [\sigma, +\infty) \\ a^2 + (d_p - \sigma)^2 \le \eps_0^2 }} ~ \log d_p + \frac{(\|\x - \wh x_i\|_2 - a)^2}{2d_p^2}  + (p-1) \log \sigma.
    \]
\end{proposition}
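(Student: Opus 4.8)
The plan is to pass from maximizing the Gaussian density to minimizing its negative logarithm, and then to show that an optimal Gaussian can be taken in a rigid form: the mean shift $\m_i - \wh x_i$ parallel to $\x - \wh x_i$, and the covariance a rank-one stretch of $\sigma^2 I$ along that same direction. Maximizing $f(\x\mid\m_i,\cov_i)$ is equivalent to minimizing $g(\m_i,\cov_i) \Let \half\log\det\cov_i + \half (\x - \m_i)^\top \cov_i^{-1}(\x - \m_i)$ over the feasible set, so after removing the common factor $(2\pi)^{p/2}$ the claim becomes the scalar identity $\min g = \alpha_i$. I would first simplify the Wasserstein constraint: diagonalizing $\cov_i = \sum_{j=1}^p \lambda_j u_j u_j^\top$ with orthonormal $u_j$ and eigenvalues $\lambda_j \ge \sigma^2$ (because $\cov_i \in \PDsigma^p$), and using that the scalar matrix $\sigma^2 I$ commutes with $\cov_i$ so that $(\sigma^2\cov_i)^{1/2} = \sigma\cov_i^{1/2}$, one obtains $\Tr{\cov_i + \sigma^2 I - 2(\sigma^2\cov_i)^{1/2}} = \Tr{(\cov_i^{1/2} - \sigma I)^2} = \sum_{j=1}^p (\sqrt{\lambda_j} - \sigma)^2$. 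Hence the constraint $c((\m_i,\cov_i),(\wh x_i,\sigma^2 I)) \le \eps_0$ is exactly $\|\m_i - \wh x_i\|_2^2 + \sum_{j=1}^p (\sqrt{\lambda_j} - \sigma)^2 \le \eps_0^2$.

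For the bound $\min g \ge \alpha_i$, I would fix any feasible $(\m_i,\cov_i)$ and set $d_p \Let \sqrt{\lambda_{\max}(\cov_i)} \ge \sigma$ and $a \Let \|\m_i - \wh x_i\|_2 \ge 0$; then $(a,d_p)$ is feasible for the two-dimensional problem, since $a^2 + (d_p - \sigma)^2 \le \|\m_i - \wh x_i\|_2^2 + \sum_j (\sqrt{\lambda_j} - \sigma)^2 \le \eps_0^2$. Three elementary estimates then bound $g$ from below: (i) $\half\log\det\cov_i = \half\sum_j \log\lambda_j \ge \log d_p + (p-1)\log\sigma$, since one eigenvalue equals $d_p^2$ and the remaining ones are at least $\sigma^2$; (ii) $\half(\x - \m_i)^\top\cov_i^{-1}(\x - \m_i) = \half\sum_j \lambda_j^{-1}(u_j^\top(\x - \m_i))^2 \ge \tfrac{1}{2d_p^2}\|\x - \m_i\|_2^2$ because $\lambda_j \le d_p^2$ and $\{u_j\}$ is orthonormal; (iii) $\|\x - \m_i\|_2 \ge \big|\,\|\x - \wh x_i\|_2 - \|\m_i - \wh x_i\|_2\,\big|$ by the reverse triangle inequality, so $\|\x - \m_i\|_2^2 \ge (\|\x - \wh x_i\|_2 - a)^2$. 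Summing the three gives $g(\m_i,\cov_i) \ge \log d_p + (p-1)\log\sigma + \tfrac{(\|\x - \wh x_i\|_2 - a)^2}{2d_p^2} \ge \alpha_i$.

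For the reverse inequality I would exhibit a maximizer attaining $\alpha_i$. Letting $(a^\star, d_p^\star)$ be optimal for the two-dimensional problem, and assuming $\x \neq \wh x_i$, set $\m_i \Let \wh x_i + a^\star (\x - \wh x_i)/\|\x - \wh x_i\|_2$ and $\cov_i \Let \sigma^2 I + (d_p^{\star 2} - \sigma^2)(\x - \wh x_i)(\x - \wh x_i)^\top / \|\x - \wh x_i\|_2^2$. Then $\x - \m_i$ is collinear with $\x - \wh x_i$, which is precisely the eigenvector of $\cov_i$ for the eigenvalue $d_p^{\star 2}$ (the remaining $p-1$ eigenvalues being $\sigma^2$), so estimates (i)--(iii) all hold with equality, $g(\m_i, \cov_i) = \alpha_i$, and feasibility reduces to $a^{\star 2} + (d_p^\star - \sigma)^2 \le \eps_0^2$ with $d_p^\star \ge \sigma$; the degenerate case $\x = \wh x_i$ is handled directly by $\m_i = \wh x_i$, $\cov_i = \sigma^2 I$, which yields $g = p\log\sigma = \alpha_i$. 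Compactness of the feasible set (the eigenvalue bound confines each $\lambda_j$ to $[\sigma^2, (\sigma+\eps_0)^2]$ and $\m_i$ to a ball) together with continuity of $g$ guarantees the maximum is attained, so the supremum in the proposition equals $\exp(-\alpha_i)/(2\pi)^{p/2}$. I do not anticipate a deep obstacle; the two points that require care are the reduction of the Wasserstein ground cost to $\sum_j(\sqrt{\lambda_j}-\sigma)^2$, and checking that the candidate Gaussian in the upper-bound step simultaneously saturates the eigenvalue estimate (i) and the alignment/triangle estimates (ii)--(iii) — which is exactly what forces the rank-one-along-$(\x - \wh x_i)$ form.
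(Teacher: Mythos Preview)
Your proof is correct and follows essentially the same route as the paper: both simplify the Wasserstein cost via the eigendecomposition to $\|\m_i-\wh x_i\|_2^2 + \sum_j(\sqrt{\lambda_j}-\sigma)^2$, align the top eigenvector of $\cov_i$ with $\x-\m_i$, place $\m_i$ on the segment from $\wh x_i$ towards $\x$, and set the remaining eigenvalues to $\sigma^2$. The only difference is packaging: the paper carries this out as a sequence of exact inner optimizations (Lemma~\ref{lemma:optimal_V} for the eigenbasis, Lemma~\ref{lemma:quadratic} for the mean, then the inner $d_j$-problem), whereas you present it as a global lower bound via estimates (i)--(iii) followed by an explicit maximizer that saturates all three --- the two arguments are equivalent and identify the same optimal $(\m_i,\cov_i)$.
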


The two auxiliary variables $a$ and $d_p$ have a specific meaning which can be explained as follows. Let $(\m_i\opt, \cov_i\opt)$ be the optimal solution of the original maximization problem over $(\m_i, \cov_i)$, and let $(a\opt, d_p\opt)$ be the optimal solution of the reduced problem over $(a, d_p)$. We then have $\| \m_i\opt - \wh x_i \|_2 = a\opt$ and $d_p\opt$ coincides with the \textit{largest} eigenvalues of $\cov_i\opt$. Next, we expose the reformulation for the minimization problem.

\begin{proposition}[Minimization subproblem] \label{prop:min-Wass}
    Fix any index $i \in \mc I_1$. For any $\wh x_i \in \R^p$, $\x \in \R^p$ and $\eps_1 \in \R_+$, we have
    \[
        \frac{\exp(\alpha_i)}{(2\pi)^{p/2}} = \left\{ 
        \begin{array}{cl}
        \min & f(\x| \m_i, \cov_i) \\
        \st & (\m_i, \cov_i) \in \R^p \times \PDsigma^p \\
        & c((\m_i, \cov_i), (\wh x_i, \sigma^2 I)) \le \eps_1,
        \end{array}
        \right.
    \]
    where $\alpha_i$ is the optimal value of the two-dimensional optimization problem
\begin{align*}
    &\min_{\substack{a \in \R_+,~d_1 \in [\sigma, +\infty) \\ a^2 + p(d_1 - \sigma)^2 \le \eps_1^2}} ~ \left\{-\log d_1 - \frac{(\|\x - \wh x_i\|_2 + a)^2}{2d_1^2} \right. \\
    & \hspace{1.5cm} \left. - (p-1) \log \left(\sigma + \sqrt{\frac{\eps^2 - a^2 - (d_1 - \sigma)^2}{p - 1}}\right) \right\}.
\end{align*}
\end{proposition}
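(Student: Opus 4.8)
First I would diagonalize the Bures--Wasserstein ground cost. Since the nominal covariance $\sigma^2 I$ is isotropic it commutes with every $\cov$, so $\big((\sigma^2 I)^{\half}\cov(\sigma^2 I)^{\half}\big)^{\half}=\sigma\cov^{\half}$, and writing $d_1^2\le\cdots\le d_p^2$ for the eigenvalues of $\cov$ we get $\Tr{\cov+\sigma^2 I-2\sigma\cov^{\half}}=\sum_{j=1}^p (d_j-\sigma)^2$. Hence the constraint $c((\mu,\cov),(\wh x_i,\sigma^2 I))\le\eps_1$ reads $\|\mu-\wh x_i\|_2^2+\sum_j (d_j-\sigma)^2\le\eps_1^2$, while $\cov\succeq\sigma^2 I$ reads $d_j\ge\sigma$. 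These bounds make the feasible set compact ($\|\mu-\wh x_i\|_2\le\eps_1$, $d_j\in[\sigma,\sigma+\eps_1]$), so the continuous map $(\mu,\cov)\mapsto f(x\,|\,\mu,\cov)$ attains its minimum; and, using $\det(\cov)^{\half}=\prod_j d_j$, we have $-\log f(x\,|\,\mu,\cov)=\tfrac p2\log(2\pi)+\sum_{j}\log d_j+\half (x-\mu)^\top\cov^{-1}(x-\mu)$. I would then prove the claimed identity by two matching inequalities, for notational simplicity assuming $p\ge2$ (the case $p=1$, where $(p-1)\log(\cdot)$ is read as $0$, being an easy special case of the same argument).

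\textbf{Lower bound.} Fix any feasible $(\mu,\cov)$, order its eigenvalues as above, and set $a\Let\|\mu-\wh x_i\|_2$ and $d\Let\sigma+\sqrt{(\eps_1^2-a^2-(d_1-\sigma)^2)/(p-1)}$ (the radicand is $\ge0$ since $a^2+(d_1-\sigma)^2\le a^2+\sum_j(d_j-\sigma)^2\le\eps_1^2$). The proof then combines: (i) the smallest eigenvalue dominates the Mahalanobis form and the triangle inequality gives $(x-\mu)^\top\cov^{-1}(x-\mu)\le\|x-\mu\|_2^2/d_1^2\le(\|x-\wh x_i\|_2+a)^2/d_1^2$; (ii) since $d_j\ge d_1\ge\sigma$, we have $p(d_1-\sigma)^2\le\sum_j(d_j-\sigma)^2\le\eps_1^2-a^2$, so $(a,d_1)$ is feasible for the two-dimensional minimization problem, whence $\log d_1+(p-1)\log d+\tfrac{(\|x-\wh x_i\|_2+a)^2}{2d_1^2}\le-\alpha_i$; (iii) by Cauchy--Schwarz $\sum_{j\ge2}(d_j-\sigma)\le\sqrt{(p-1)\sum_{j\ge2}(d_j-\sigma)^2}\le\sqrt{(p-1)(\eps_1^2-a^2-(d_1-\sigma)^2)}$, hence $\tfrac1{p-1}\sum_{j\ge2}d_j\le d$, and concavity of $\log$ gives $\sum_{j\ge2}\log d_j\le(p-1)\log d$. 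Plugging (i) and (iii) into the identity for $-\log f$ and then invoking (ii) yields $-\log f(x\,|\,\mu,\cov)\le\tfrac p2\log(2\pi)-\alpha_i$, i.e.\ $f(x\,|\,\mu,\cov)\ge\exp(\alpha_i)/(2\pi)^{p/2}$; minimizing over the arbitrary feasible $(\mu,\cov)$ completes this direction.

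\textbf{Achievability.} Conversely, given any $(a,d_1)$ feasible for the two-dimensional problem, set $d$ as above, let $u$ be a unit vector parallel to $\wh x_i-x$ (arbitrary when $x=\wh x_i$), and put $\mu\Let\wh x_i+au$ and $\cov\Let d_1^2\,ww^\top+d^2(I-ww^\top)$ with $w\Let(x-\mu)/\|x-\mu\|_2$. Then $\|\mu-\wh x_i\|_2=a$, the eigenvalues of $\cov$ are $d_1^2$ and $d^2$ (multiplicity $p-1$), all $\ge\sigma^2$, and $\sum_j(d_j-\sigma)^2=(d_1-\sigma)^2+(p-1)(d-\sigma)^2=\eps_1^2-a^2$, so $(\mu,\cov)$ is feasible with $c=\eps_1$. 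A short computation gives $\|x-\mu\|_2=\|x-\wh x_i\|_2+a$ and $(x-\mu)^\top\cov^{-1}(x-\mu)=(\|x-\wh x_i\|_2+a)^2/d_1^2$, hence $f(x\,|\,\mu,\cov)=\exp\big(-\log d_1-(p-1)\log d-\tfrac{(\|x-\wh x_i\|_2+a)^2}{2d_1^2}\big)/(2\pi)^{p/2}$. Taking the infimum of the exponent over feasible $(a,d_1)$ shows that the minimization problem has value $\le\exp(\alpha_i)/(2\pi)^{p/2}$, and together with the lower bound this proves the proposition.

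\textbf{Where the difficulty lies.} The computations above are routine; the one genuine step is to discover the right ansatz for the worst-case Gaussian of the \emph{pessimistic} likelihood --- a single principal axis with eigenvalue $d_1^2$ aligned with the residual $x-\mu$ (to inflate the Mahalanobis term) and the remaining $p-1$ eigenvalues pushed to the common value $d$ that exhausts the leftover transport budget (to inflate $\det\cov$), with the bookkeeping requirement that $d_1$ really be the smallest eigenvalue translating exactly into the constraint $a^2+p(d_1-\sigma)^2\le\eps_1^2$. Once this is seen, the lower bound is precisely the Cauchy--Schwarz/Jensen chain in (iii), the upper bound is verification, and the Bures simplification is painless because $\sigma^2 I$ commutes with everything. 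This parallels Proposition~\ref{prop:max-Wass}, where the inequalities reverse: for the \emph{optimistic} likelihood one aligns the single \emph{large} eigenvalue $d_p^2$ with $x-\mu$ to shrink the Mahalanobis term, moves $\mu$ \emph{toward} $x$, and keeps the other $p-1$ eigenvalues at the floor $\sigma$, which is what produces the $(p-1)\log\sigma$ term and the constraint $a^2+(d_p-\sigma)^2\le\eps_0^2$ there.
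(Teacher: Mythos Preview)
Your proof is correct and follows essentially the same route as the paper: both diagonalize the Bures cost via the isotropy of $\sigma^2 I$, align the eigenvector of the smallest eigenvalue with $x-\mu$, push $\mu$ away from $x$ along the direction $\wh x_i-x$, and then equalize the remaining $p-1$ eigenvalues to exhaust the leftover transport budget. The only difference is organizational---the paper presents these steps as a chain of nested optimizations solved via Lemmas~\ref{lemma:optimal_V}, \ref{lemma:quadratic} and Proposition~\ref{prop:max-e}, whereas you package the same content as a direct two-sided sandwich, with your Cauchy--Schwarz/Jensen step in~(iii) serving as a slightly more elementary substitute for the exchange argument of Proposition~\ref{prop:max-e}.
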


There is a similar relationship between $(\m_i\opt, \cov_i\opt)$ which solves the original minimization problem and $(a\opt, d_1\opt)$ which solves the reduced problem: we have $\| \m_i\opt - \wh x_i \|_2 = a\opt$ and $d_1\opt$ coincides with the \textit{smallest} eigenvalues of $\cov_i\opt$.

The above discussion reveals that we can fully reconstruct the distribution~$\QQ_0\opt$ that solves~\eqref{eq:likelihood-max} and $\QQ_1\opt$ that solves~\eqref{eq:likelihood-min} from the solutions of the reduced subproblems, we provide this reconstruction in Appendix~\ref{sec:recovery}.


\section{Numerical Experiment} 
\label{sec:numerical}

\begin{figure*}
    \centering
    \includegraphics[width=\textwidth]{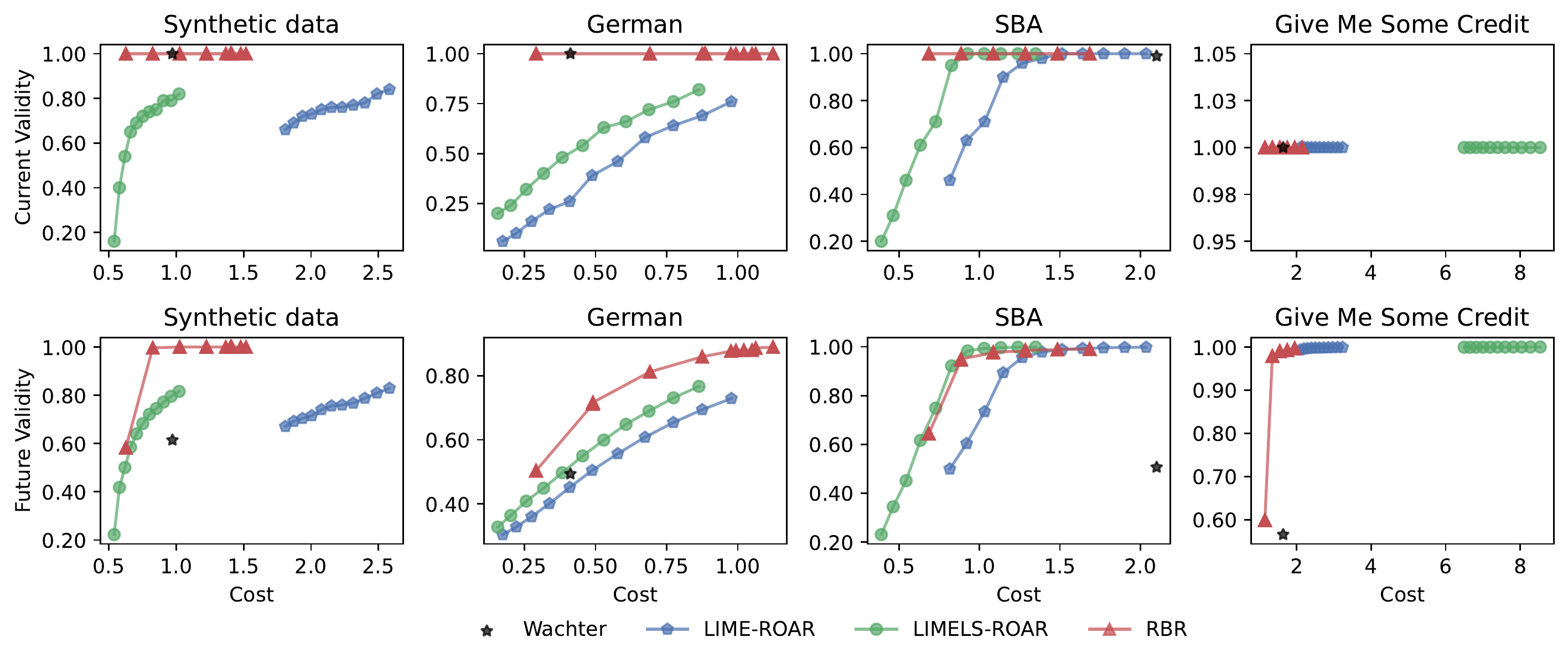}
    \caption{Pareto frontiers of the cost-validity trade-off with the MLP classifier, on synthetic, German Credit, Small Business Administration, and Give Me Some Credit datasets.}
    \label{fig:pareto}
     \vspace{-3mm}
\end{figure*}

We evaluate in this section the robustness to model shifts of different recourses, together with the trade-off against the cost of adopting the recourse's recommendation. We compare our proposed robust Bayesian recourse method, namely RBR, against the counterfactual explanation of Wachter~\citep{ref:wachter2017counterfactual} and against the robust recourse generated by ROAR~\citep{ref:upadhyay2021towards} using either LIME~\citep{ref:ribeiro2016why} and LIMELS~\citep{ref:laugel2018defining} as a surrogate model\footnote{While LIME samples synthetic data \textit{globally} and train a weighted ridge regression, LIMELS generates the local surrogate model by training a (unweighted) ridge regression on the data sampled \textit{locally} near by the closest counterfactual of the input instance (similar to the sampling procedure described in Section \ref{sec:recourse}).}.

\subsection{Experimental Setup}

\textbf{Datasets.} We examine the recourse generators on both a synthetic dataset and the real-world datasets: \textit{German Credit} \citep{ref:dua2017uci, ref:groemping2019south}, \textit{Small Bussiness Administration (SBA)} \citep{ref:li2018should}, and \textit{Give Me Some Credit (GMC)}. Each dataset contains two sets of data: $D_1$ and $D_2$. The former is the current data which is used to train current classifier to generate recourses. The latter represents the possible data arriving in the future. 

For each dataset, we use 80\% of the instances in the current data $D_1$ to train the underlying predictive model and fix this classifier to construct recourses for the remaining 20\% of the instances. The future data $D_2$ will be used to train future classifiers, which are for evaluation only. 

\textbf{Classifier.} We use a three-layer MLP with 20, 50 and 20 nodes, respectively with a ReLU activation in each consecutive layer. The sigmoid function is used in the last layer to produce predictive probabilities. The performance of the MLP classifier is reported in Table~\ref{tab:clf_prfm}.

\begin{table*}
    \centering
    \begin{tabular}{lcccc}
        \toprule
         \multirow{2}{*}{Dataset} & \multicolumn{2}{c}{Present data $D_1$} & \multicolumn{2}{c}{Shift data $D_2$} \\
         \cmidrule(r){2-3} \cmidrule(r){4-5}
         & \textit{Accuracy} & \textit{AUC} & \textit{Accuracy} & \textit{AUC} \\
         \midrule
         Synthetic data & 0.99 $\pm$ 0.00 & 1.00 $\pm$ 0.00 & 0.94 $\pm$ 0.01 & 0.99 $\pm$ 0.01 \\
         German Credit & 0.67 $\pm$ 0.02 & 0.60 $\pm$ 0.03 & 0.66 $\pm$ 0.23 & 0.60 $\pm$ 0.04 \\
         SBA & 0.96 $\pm$ 0.00 & 0.99 $\pm$ 0.00 & 0.98 $\pm$ 0.01 & 0.96 $\pm$ 0.01 \\
         GMC & 0.94 $\pm$ 0.00 & 0.84 $\pm$ 0.00 & 0.94 $\pm$ 0.00 & 0.84 $\pm$ 0.00 \\
         \bottomrule
    \end{tabular}
    \caption{Accuracy and AUC results of the MLP classifier on the synthetic and real-world datasets.}
    \label{tab:clf_prfm}
    \vspace{-4mm}
\end{table*}

\textbf{Sampling procedure.} We employ the sampling scheme described in Section~\ref{sec:recourse}. We choose the number of counterfactuals $K = 1000$ and sample $200$ synthetic samples uniformly with a sampling radius $r_p = 0.2$.

\textbf{Metrics.} To measure the ease of adopting a recourse, we use the $\ell_1$-distance as the cost function $\varphi$ on the covariate space $\mc X$, this choice is similar to \citet{ref:ustun2019actionable} and \citet{ref:upadhyay2021towards}. We define the \textit{current validity} as the validity of the recourses with respect to the current classifier $\mc C$. To evaluate the robustness of recourses to the changes in model's parameters, we sample $20\%$ of the instances in the data set $D_2$ as the arrival data. We then re-train the classifier with the old data (80\% of $D_1$) coupled with this arrival data to simulate the future classifiers $\tilde{\mc C}$. We repeat this procedure 100 times to obtain 100 future classifiers and report the \textit{future validity} of a recourse as the fraction of the future classifiers with respect to which the recourse is valid.

\subsection{Experimental Details} \label{sec:app-exp}

We use both synthetic and real-world datasets. 

\textbf{Synthetic dataset. } We synthesize the 2-dimensional data by sampling 1000 instances uniformly in a rectangle $ [-2, 4] \times [-2, 7]$. For each sample, we label using the function $f(x) = 1$ if $x_2 \ge 1 + x_1 + 2 x_1^2 + x_1^3 - x_1^4 + \varepsilon$, and $f(x) = $ otherwise,
where $\varepsilon$ is a random noise. We set $\varepsilon = 0$ when generating the present set $D_1$ and $\varepsilon \sim \mc N(0, 1)$ for the future set $D_2$. 


\textbf{Real-world datasets. } Three real-world datasets are used.

\begin{enumerate}[label=-]
    \item \textit{German Credit} \citep{ref:dua2017uci}. The dataset contains the information (e.g. age, gender, financial status,...) of 1000 customers who took a loan from a bank. The classification task is to determine the risk (good or bad) of an individual. There is another version of this dataset regarding to corrections of coding error \citep{ref:groemping2019south}. We use the corrected version of this dataset as a shifted data to capture correction shift. The features we used in this dataset include `duration', `amount', `personal\_status\_sex', and `age'.
    \item \textit{Small Bussiness Administration (SBA)} \citep{ref:li2018should}. This dataset includes 2102 observations of small business loan approvals from 1987 to 2014. We divide it into two datasets (one is instances from 1989 - 2006 and one is instances from 2006 - 2014) to capture temporal shift. We use the following features: `Term', `NoEmp', `CreateJob', `RetainedJob', `UrbanRural', `ChgOffPrinGr', `GrAppv', `SBA\_Appv', `New', `RealEstate', `Portion', `Recession'.
    \item \textit{Give Me Some Credit (GMC)}\footnote{https://www.kaggle.com/competitions/GiveMeSomeCredit/data}. This dataset is used to predict if a person would experience financial distress in the next two years. Given 150000 entries from the available dataset, we randomly shuffle and partition the data equally into the current set $D_1$ and the shifted set $D_2$. Each entry contains 10 features: `RevolvingUtilizationOfUnsecuredLines', `age', `NumberOfTime30-59DaysPastDueNotWorse', `DebtRatio', `MonthlyIncome', `NumberOfOpenCreditLinesAndLoans', `NumberOfTimes90DaysLate', `NumberRealEstateLoansOrLines', `NumberOfTime60-89DaysPastDueNotWorse', `NumberOfDependents'.
\end{enumerate}

\subsection{Cost-validity trade-off}
We obtain the Pareto front for the trade-off between the cost of adopting recourses produced by RBR and their validity by varying the ambiguity sizes $\eps_1$ and $\eps_0$, along the maximum recourse cost $\delta$, with $\delta = \|x_0 - x_b\|_1 + \delta_+$. Particularly, we consider $\sigma = 1.0$, $\eps_0, \eps_1 \in \{ 0.5k \; | \; k = 0, \ldots, 2\}$, and $\delta_+ \in \{0.2l \; | \; l = 0, \ldots, 5\}$. The frontiers for ROAR-based methods are obtained by varying $\delta_{\max} \in \{ 0.02m \; | \; m = 0, \ldots, 10 \}$, where $\delta_{\max}$ is the tuning parameter of ROAR. As shown in Figure \ref{fig:pareto}, increasing $\eps_1$ and $\delta_+$ generally increase the future validity of recourses yielded by RBR at the sacrifice of the cost, while sustaining the current validity. Yet, the frontiers obtained by RBR either dominate or comparable to other frontiers of Wachter, LIME-ROAR, and LIMELS-ROAR.

\textbf{Conclusions.} 
In this work, we proposed the robust Bayesian recourse which aims to be effective at reversing algorithmic outcome under potential model shifts. It is a model-agnostic approach that does not require approximating the nonlinear classifier by a linear surrogate. Instead, the robust Bayesian recourse minimizes directly the worst-case posterior probability odds ratio subject to the cost constraint bound. The robustness is designed with respect to the Wasserstein-Gaussian mixture ambiguity sets of the conditional distributions, in which the neighborhood is prescribed using an optimal transport (type-$\infty$ Wasserstein) distance.  We showed that the min-max recourse problem can be optimized using a gradient descent algorithm, which exploits separability and dimensionality reduction when evaluating the objective value. Our experiments on synthetic and real-world datasets demonstrate that the robust Bayesian recourse is more robust at a lower cost than other baselines.

While this paper focus on algorithmic transparency, we note that transparency may lead to the tension between transparency and gaming-the-system behaviors: the greater transparent be the decision process the more opportunity for exploitative manipulations ~\citep{ref:yan2022margin}. We envision that robustness techniques may alleviate these gaming behaviors and may lead to more trustworthy guarantees of (machine learning) algorithms.

\textbf{Acknowledgments.}
    Man-Chung Yue is supported by the HKRGC under the General Research Fund project 15305321.

\bibliography{bibliography}

\newpage
\clearpage 
\onecolumn

\appendix

\section{Proofs of Section~\ref{sec:compute}}

\newtheorem*{lemma:separability}{Lemma~\ref{lemma:separability} (re-stated)}
\begin{lemma:separability}
    There exists a distribution~$\QQ_0\opt$ that solves~\eqref{eq:likelihood-max} and is a mixture of at most $N_0$ Gaussian components. Moreover, problem~\eqref{eq:likelihood-max} is equivalent to a separable problem of the form
    \begin{align*}
        \max~\{ L(\x, \QQ_0) : \QQ_0 \in \mbb B_{\eps_0}(\Pnom_0^\sigma)\} = \left\{\begin{array}{cll}
            \max &  \frac{1}{N_0} \sum_{i \in \mc I_0} f(\x | \m_i, \cov_i) \\ 
            \st & (\m_i, \cov_i) \in \R^p \times \PDsigma^p \\
            & c((\m_i, \cov_i), (\wh x_i, \sigma^2 I)) \le \eps_0 & \forall i \in \mc I_0.
        \end{array} \right.
    \end{align*}
    An analogous result holds for problem~\eqref{eq:likelihood-min} with the corresponding subscript $y=1$.
\end{lemma:separability}
\begin{proof}[Proof of Lemma~\ref{lemma:separability}]
    There exists a distribution~$\QQ_0\opt$ that solves~\eqref{eq:likelihood-max} and is a mixture of at most $N_0$ Gaussian components. Moreover, problem~\eqref{eq:likelihood-max} is equivalent to a separable problem of the form
    \begin{align*}
    &\max~\{ L(\x, \QQ_0) : \QQ_0 \in \mbb B_{\eps_0}(\Pnom_0^\sigma)\} \\
    =& \left\{\begin{array}{cll}
        \max &  \frac{1}{N_0} \sum_{i \in \mc I_0} f(\x | \m_i, \cov_i) \\ 
        \st & (\m_i, \cov_i) \in \R^p \times \PDsigma^p \\
        & c((\m_i, \cov_i), (\wh x_i, \sigma^2 I)) \le \eps_0 & \forall i \in \mc I_0.
    \end{array} \right.
    \end{align*}
    
    We use $\forall i$ implies $\forall i \in \mc I_0$, and $\sum_{i}$ is also taken over the same set. Given any $\x$, the likelihood of $\x$ under any Gaussian mixture $\QQ_0$ can be written using the corresponding measure $\nu_0$ as
    \[
        L(\x, \QQ_0) = \int_{\R^p \times \PSD^p} f(\x | \m, \cov) \nu_0(\mathrm{d} \m, \mathrm{d} \cov).
    \]
    Recall that $\Xi = \R^p \times \PDsigma^p$. Using the definition of the type-$\infty$ Wasserstein, we find
    \begin{align*}
        &\ \Wass_c(\nu_0, \wh \nu_0) \le \eps_0 \\
        \Leftrightarrow &\  \exists\lambda\in \Lambda(\nu_0, \wh \nu_0) \text{ such that} \\
        &\  \mathrm{ess} \Sup{\lambda} \big\{ c((\m, \cov), (\m', \cov')) : (\m, \cov, \m', \cov')  \in \Xi \times \Xi \big\} \le \eps_0\\
        \Leftrightarrow &\  \forall i\  \exists \lambda_i \in \mc P(\Xi) \text{ such that} \\
        &\ \mathrm{ess} \Sup{\lambda_i} \big\{ c((\m, \cov), (\wh x_i, \sigma I)) : (\m, \cov)  \in \Xi  \big\} \le \eps_0\\
        \Leftrightarrow &\  \forall i\  \exists \lambda_i \in \mc P(\Xi) \text{ such that}\\
        &\  c((\m,\cov), (\wh x_i, \sigma I)) \le \eps_0 \quad (\m, \cov)\in \mathrm{supp}(\lambda_i),
    \end{align*}
    where the second equivalence follows from that $\wh \nu_0 = \frac{1}{N_0} \sum_{i} \delta_{(\wh x_i, \sigma^2 I)}$ and hence any $\lambda \in \Lambda(\nu_0, \wh \nu_0)$ takes the form $\frac{1}{N_0} \sum_{i} \lambda_i \otimes \delta_{(\wh x_i, \sigma^2 I)}$ for some probability measures $\lambda_i \in \mc P(\Xi)$, and the third equivalence follows from Lemma~\ref{lem:sup_esssup}.
    Hence, problem~\eqref{eq:likelihood-max} is equivalent to
    \begin{align*}
    &\left\{
    \begin{array}{cl}
        \max &  \int_{\R^p \times \PDsigma^p} f(\x | \m, \cov) \nu_0(\mathrm{d} \m, \mathrm{d} \cov) \\
        \st & \nu_0 \in \mc P(\R^p \times \PDsigma^p) \\
        & \Wass_c(\nu_0, \wh \nu_0) \le \eps_0
    \end{array}
    \right. \\
    =&\left\{
    \begin{array}{cl}
        \max &  \frac{1}{N_0} \sum_{i} \int_{\R^p \times \PDsigma^p} f(\x | \m_i, \cov_i) \lambda_i(\mathrm{d} \m_i, \mathrm{d} \cov_i) \\
        \st & \lambda_i \in \mc P(\R^p \times \PDsigma^p) \qquad \forall i\\
        & c((\m_i, \cov_i), (\wh x_i, \sigma^2 I)) \le \eps_0  \quad \forall (\m_i, \cov_i) \in \mathrm{supp}(\lambda_i) \qquad \forall i.
    \end{array}
    \right.
    \end{align*}
    It is easy now to employ a greedy argument to show that the optimal solution for $\lambda_i$ should be a Dirac delta distribution supported on one point in the space of $\R^p \times \PDsigma^p$. This leads to the conclusion regarding the maximization problem~\eqref{eq:likelihood-max}. 
    
    An similar argument can be applied for the minimization problem~\eqref{eq:likelihood-min}, the detailed proof is omitted.
\end{proof}

\begin{lemma}
    \label{lem:sup_esssup}
    For any $\lambda\in \mc P (\Xi)$, $\wh x\in \R^p$, $\sigma ,\eps >0$ and any function $c:\Xi\times \Xi \to \R$ such that the map $(\m, \cov) \mapsto c( (\m,\cov), (\wh x, \sigma^2 I))$ is continuous, we have $\mathrm{ess}\sup_{\lambda} c((\m, \cov), (\wh x, \sigma^2 I)) \le \eps$ if and only if $c((\m, \cov), (\wh x, \sigma^2 I)) \le \eps$ for any $(\m, \cov)\in \mathrm{supp}(\lambda)$.
\end{lemma}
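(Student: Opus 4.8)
The plan is to prove the equivalence by establishing both directions separately, relying on the fact that $\mathrm{supp}(\lambda)$ is by definition the smallest closed set carrying full $\lambda$-mass, together with the continuity of $g(\m,\cov) \Let c((\m,\cov),(\wh x,\sigma^2 I))$.

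First I would prove the ``only if'' direction. Suppose $\mathrm{ess}\sup_\lambda g \le \eps$. By definition of the essential supremum, the set $A \Let \{(\m,\cov)\in\Xi : g(\m,\cov) > \eps\}$ has $\lambda$-measure zero. I want to conclude that $A$ is disjoint from $\mathrm{supp}(\lambda)$. Since $g$ is continuous, $A$ is open, so for any point $(\m_0,\cov_0)\in A$ there is an open neighborhood $U \subseteq A$ with $\lambda(U)=0$; hence $(\m_0,\cov_0)\notin\mathrm{supp}(\lambda)$ by the characterization of the support as the set of points all of whose open neighborhoods have positive measure. Therefore $\mathrm{supp}(\lambda)\cap A=\emptyset$, which is exactly the statement that $g(\m,\cov)\le\eps$ for every $(\m,\cov)\in\mathrm{supp}(\lambda)$.

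Next I would prove the ``if'' direction, which does not even need continuity. Suppose $g(\m,\cov)\le\eps$ for every $(\m,\cov)\in\mathrm{supp}(\lambda)$. Since $\lambda(\Xi\setminus\mathrm{supp}(\lambda))=0$ (a standard fact, valid because $\Xi=\R^p\times\PDsigma^p$ is a separable metric space, so the complement of the support is a countable union of null open sets), the inequality $g\le\eps$ holds $\lambda$-almost everywhere, and hence $\mathrm{ess}\sup_\lambda g \le \eps$ by the definition of the essential supremum as the smallest such almost-sure bound. Combining the two directions gives the claimed equivalence.

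The only mild subtlety — and the step I would flag as needing a word of justification rather than genuine difficulty — is the use of continuity of $g$ in the ``only if'' direction: without it, $\{g>\eps\}$ need not be open and one cannot pass from ``$\lambda$-null'' to ``disjoint from the support.'' Since the paper's cost $c$ is continuous on $\R^p\times\PDsigma^p$, this hypothesis is met in the application to Lemma~\ref{lemma:separability}. The second fact invoked, that $\lambda$ assigns full mass to its support on a separable metric space, is standard and can be cited; everything else is a direct unwinding of definitions.
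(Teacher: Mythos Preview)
Your proposal is correct and follows essentially the same approach as the paper: both directions hinge on the characterization of $\mathrm{supp}(\lambda)$ via open neighborhoods of positive measure, with continuity of $g$ used in the ``only if'' direction to make $\{g>\eps\}$ open, and the fact $\lambda(\mathrm{supp}(\lambda))=1$ used in the ``if'' direction. The paper phrases the ``only if'' direction as a proof by contradiction (pick a support point with $g>\eps$, get an open neighborhood of positive measure, contradict the essential-supremum bound) whereas you argue directly that $\{g>\eps\}$ is open, null, and hence disjoint from the support; these are the same argument in contrapositive form.
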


\begin{proof}[Proof of Lemma~\ref{lem:sup_esssup}]
We first prove the ``only if'' direction. Suppose that there exists $(\m', \cov')\in \mathrm{supp}(\lambda)$ such that 
\[c ((\m', \cov'), (\wh x, \sigma^2 I)) > \eps .\]
By continuity of the map $(\m, \cov) \mapsto c( (\m,\cov), (\wh x, \sigma^2 I))$, there exists an open neighbourhood $U\subseteq \Xi$ containing $(\m', \cov')$ such that
\[ c ((\m, \cov), (\wh x, \sigma^2 I)) > \eps\quad\forall (\m,\cov) \in U . \]
By the definition of of support, $\lambda (U) > 0$. Therefore,
\[ \Pr_{\lambda} ( c((\m, \cov), (\wh x, \sigma^2 I)) \le \eps ) = 1- \Pr_{\lambda} ( c((\m, \cov), (\wh x, \sigma^2 I)) > \eps ) \le 1 - \lambda (U) < 1, \]
which contradicts to that $\mathrm{ess}\sup_{\lambda} c((\m, \cov), (\wh x, \sigma^2 I)) \le \eps$.

We next prove the ``if'' direction. By the law of total probability and the fact that $c((\m, \cov), (\wh x, \sigma^2 I)) \le \eps$ for any $(\m, \cov)\in \mathrm{supp}(\lambda)$,
\begin{align*}
    & \Pr_\lambda \left( c((\m, \cov), (\wh x, \sigma^2 I)) \le \eps \right) \\
    = & \Pr_\lambda \left( c((\m, \cov), (\wh x, \sigma^2 I)) \le \eps | (\m,\cov)\in\mathrm{supp}(\lambda) \right)\lambda (\mathrm{supp}(\lambda)) \\
    &\quad + \Pr_\lambda \left( c((\m, \cov), (\wh x, \sigma^2 I)) \le \eps | (\m,\cov)\not\in\mathrm{supp}(\lambda) \right) (1- \lambda( \mathrm{supp}(\lambda)))\\
    = & 1\cdot 1 + \Pr_\lambda \left( c((\m, \cov), (\wh x, \sigma^2 I)) \le \eps | (\m,\cov)\not\in\mathrm{supp}(\lambda) \right)\cdot 0 = 1,
\end{align*}
which completes the proof.
\end{proof}

\newtheorem*{prop:max-Wass}{Proposition~\ref{prop:max-Wass} (re-stated)}
\begin{prop:max-Wass}
    Fix any index $i \in \mc I_0$. For any $\wh x_i \in \R^p$, $\x \in \R^p$ and $\eps_0 \in \R_+$, we have
    \[
         \frac{\exp(-\alpha_i)}{(2\pi)^{p/2}} = \left\{
            \begin{array}{cl}
            \max & f(\x| \m_i, \cov_i) \\
            \st & (\m_i, \cov_i) \in \R^p \times \PDsigma^p \\
             &c((\m_i, \cov_i), (\wh x_i, \sigma^2 I)) \le \eps_0, 
            \end{array}
         \right.
    \]
    where $\alpha_i$ is the optimal value of the two-dimensional optimization problem
     \[
        \min_{\substack{a \in \R_+,~d_p \in [\sigma, +\infty) \\ a^2 + (d_p - \sigma)^2 \le \eps_0^2 }} ~ \log d_p + \frac{(\|\x - \wh x_i\|_2 - a)^2}{2d_p^2}  + (p-1) \log \sigma.
    \]
\end{prop:max-Wass}
\begin{proof}[Proof of Proposition~\ref{prop:max-Wass}] Let $\alpha_i$ be the optimal value of the negative log-likelihood minimization problem
\[
    \alpha_i = \left\{ \begin{array}{cl}
        \min & \half \log \det \Sigma_i + \half (\x - \m_i)^\top \cov_i^{-1} (\x - \m_i)  \\
        \st & \m_i \in \R^p,~\cov_i \in \PSD^p \\
        & \| \m_i - \wh x_i \|_2^2 + \Tr{\cov_i + \sigma^2 I - 2 \big( (\sigma^2 I)^{\half} \cov_i (\sigma^2 I)^{\half} \big)^{\frac{1}{2}} } \leq \eps_0^2 \\
        & \cov_i \succeq \sigma^2 I.
    \end{array}
    \right.
\]
It is easy to see that
\[
    \max\{ f(\x| \m_i, \cov_i) : (\m_i, \cov_i) \in \R^p \times \PDsigma^p,~c((\m_i, \cov_i), (\wh x_i, \sigma^2 I)) \le \eps_0\} = \frac{1}{\sqrt{(2\pi)^p}} \exp(-\alpha_i ).
\]
It remains to provide a simpler formulation to determine $\alpha_i$. To simplify the notation, we omit the index $i$ on all variables and parameters.
We reparameterize $\cov = V \diag(d^2) V^\top$ for a vector $d \in \R_+^p$, where $\diag(d^2)$ denotes a $\R^{p\times p}$ diagonal matrix with its $j$-th diagonal entries equals to $d_j^2$, and $\mathrm{O}(p)$ is the set of $p$-dimensional orthogonal matrices
\[
    \mathrm{O}(p) = \{ V \in \R^{p \times p} : V^\top V = I_p\}.
\] 
The negative log-likelihood minimization problem is further equivalent to
\[
    \begin{array}{cl}
        \min & \sum_{j=1}^p \log d_j + \half (V^\top(\x - \m))^\top \diag(d^{-2}) (V^\top(\x - \m))  \\
        \st & ~ d \in \R_+^p, ~V \in \mathrm{O}(p),~\m \in \R^p  \\
        & 
        \| \m - \wh x \|_2^2 + \sum_{j=1}^p (d_j - \sigma)^2 \leq \eps_0^2 \\
        & d \ge \sigma,
    \end{array}
\]
where $d \ge \sigma$ implies the element-wise constraints $d_j \ge \sigma$ for any $j = 1, \ldots, p$.
We introduce an auxiliary variable $a \in \R_+$ and rewrite the optimization problem in an equivalent way as
\[
\min_{\substack{a \in \R_+,~d \in \R_+^p, ~ d \ge \sigma  \\ a^2 + \sum_j (d_j - \sigma)^2 \le \eps_0^2} } ~ \min_{\substack{\m \in \R^p \\ \|\m - \wh x\|_2^2 = a^2}} ~ \min_{V \in \mathrm{O}(p)}~ \sum_{j=1}^p \log d_j + \half (V^\top(\x - \m))^\top \diag(d^{-2}) (V^\top(\x - \m)). 
\]
Notice that the above optimization problem is invariant to the ordering of the entries of $d$.
As a consequence, without any loss of generality, we can assume that $d_p$ is the maximum value across all $d_j$. By Lemma~\ref{lemma:optimal_V}, the above optimization problem becomes
\[
\min_{\substack{a \in \R_+,~d \in \R_+^p, ~ d \ge \sigma \\ a^2 + \sum_j (d_j - \sigma)^2 \le \eps_0^2\\ d_p = \max\{d\} }} ~ \min_{\substack{\m \in \R^p \\ \|\m - \wh x\|_2^2 = a^2}} ~  \sum_{j=1}^p \log d_j + \frac{1}{2d_p^2} \|\x - \m\|_2^2. 
\]
Using Lemma~\ref{lemma:quadratic}, we obtain the equivalent optimization problem
\[
\min_{\substack{a \in \R_+,~d \in \R_+^p, ~ d \ge \sigma \\ a^2 + \sum_j (d_j - \sigma)^2 \le \eps_0^2\\ d_p = \max\{d\} }} ~  \sum_{j=1}^p \log d_j + \frac{1}{2d_p^2} (\|\x - \wh x\|_2 - a)^2.
\]
Rewriting the above problem into a two-layer optimization problem
\begin{equation}
\label{eq:main_eq_max_ot}
\min_{\substack{a \in \R_+,~d_p \in \R_+, ~ d_p \ge \sigma \\ a^2 + (d_p - \sigma)^2 \le \eps_0^2 }} ~ \left\{\log d_p + \frac{1}{2d_p^2} (\|\x - \wh x\|_2 - a)^2 + \Min{\substack{d_j \in \R_+, ~ d_j \ge \sigma ~ \forall j=1, \ldots, p-1\\ \sum_{j=1}^{p-1} (d_j - \sigma)^2 \le \eps_0^2 - a^2 - (d_p - \sigma)^2\\
d_j \le d_p ~ \forall j=1, \ldots, p-1}} \sum_{j=1}^{p-1} \log d_j \right\}.
\end{equation}
Notice that for any $d_p$ that is feasible for the outer minimization problem, the inner minimization problem over $d_j$, $\forall j = 1, \ldots, p-1$ admits a non-empty feasible set. Indeed, because $d_p \ge \sigma$, the value $d_j = \sigma$, $j = 1, \ldots, p-1$ is a feasible solution for the inner problem. We now focus on solving the inner minimization problem. As $\log (\cdot)$ is an increasing function, for any $s \ge 0$, we find
\[
\min_{\substack{d_p\ge d_j \ge \sigma ~ \forall j=1, \ldots, p-1 \\ \sum_{j=1}^{p-1} (d_j - \sigma)^2 \le s}}~\sum_{j=1}^{p-1} \log d_j = (p-1) \log \sigma,
\]
 which holds because the optimization problem on the left hand side admits the optimal solution $d\opt_j = \sigma$ for all $j = 1, \dots, p-1$. This completes the proof.
\end{proof}

\newtheorem*{prop:min-Wass}{Proposition~\ref{prop:min-Wass} (re-stated)}
\begin{prop:min-Wass}
    Fix any index $i \in \mc I_1$. For any $\wh x_i \in \R^p$, $\x \in \R^p$ and $\eps_1 \in \R_+$, we have
    \[
        \frac{\exp(\alpha_i)}{(2\pi)^{p/2}} = \left\{ 
        \begin{array}{cl}
        \min & f(\x| \m_i, \cov_i) \\
        \st & (\m_i, \cov_i) \in \R^p \times \PDsigma^p \\
        & c((\m_i, \cov_i), (\wh x_i, \sigma^2 I)) \le \eps_1,
        \end{array}
        \right.
    \]
    where $\alpha_i$ is the optimal value of the two-dimensional optimization problem
    \begin{align*}
        \min_{\substack{a \in \R_+,~d_1 \in [\sigma, +\infty) \\ a^2 + p(d_1 - \sigma)^2 \le \eps_1^2}} ~ \left\{-\log d_1 - \frac{(\|\x - \wh x_i\|_2 + a)^2}{2d_1^2} - (p-1) \log \left(\sigma + \sqrt{\frac{\eps^2 - a^2 - (d_1 - \sigma)^2}{p - 1}}\right) \right\}.
    \end{align*}
\end{prop:min-Wass}
\begin{proof}[Proof of Proposition~\ref{prop:min-Wass}]
Let $\alpha_i$ be the optimal value of the log-likelihood \textit{minimization} problem
\[
    \alpha_i = \left\{ \begin{array}{cl}
        \min & -\half \log \det \Sigma_i - \half (\x - \m_i)^\top \cov^{-1} (\x - \m_i)  \\
        \st & \m_i \in \R^p,~\cov_i \in \PSD^p \\
        & \| \m_i - \wh x_i \|_2^2 + \Tr{\cov_i + \sigma^2 I - 2 \big( (\sigma^2 I)^{\half} \cov_i (\sigma^2 I)^{\half} \big)^{\frac{1}{2}} } \leq \eps_1^2 \\
        & \cov_i \succeq \sigma^2 I.
    \end{array}
    \right.
\]
It is easy to see that
\[
    \min\{ f(\x| \m_i, \cov_i) : (\m_i, \cov_i) \in \R^p \times \PDsigma^p,~c((\m_i, \cov_i), (\wh x_i, \sigma^2 I)) \le \eps_1\} =  \frac{1}{(2\pi)^{p/2}} \exp(\alpha_i).
\]
It remains to provide the computational routine to determine $\alpha_i$. To simplify the notation, we omit the index $i$ on all variables and parameters.
We reparameterize $\cov = V \diag(d^2) V^\top$ for a vector $d \in \R_+^p$, where $\diag(d^2)$ denotes a $\R^{p\times p}$ diagonal matrix with its $j$-th diagonal entries equals to $d_j^2$, and $\mathrm{O}(p)$ is the set of $p$-dimensional orthogonal matrices
\[
    \mathrm{O}(p) = \{ V \in \R^{p \times p} : V^\top V = I_p\}.
\] 
The log-likelihood minimization problem is further equivalent to
\[
    \begin{array}{cl}
        \min & -\sum_{j=1}^p \log d_j - \half (V^\top(\x - \m))^\top \diag(d^{-2}) (V^\top(\x - \m))  \\
        \st & d \in \R_+^p,~V \in \mathrm{O}(p),~\m \in \R^p  \\
        & 
        \| \m - \wh x \|_2^2 + \sum_{j=1}^p (d_j - \sigma)^2 \leq \eps_1^2 \\
        & d \ge \sigma,
    \end{array}
\]
where $d \ge \sigma$ implies the element-wise constraints $d_j \ge \sigma$ for any $j = 1, \ldots, p$.
We introduce an auxiliary variable $a \in \R_+$ and rewrite the optimization problem in an equivalent way as
\[
\min_{\substack{a \in \R_+,~d \in \R_+^p, ~d \ge \sigma \\ a^2 + \sum_j (d_j - \sigma)^2 \le \eps_1^2} } ~ \min_{\substack{\m \in \R^p \\ \|\m - \wh x\|_2^2 = a^2}} ~ \min_{V \in \mathrm{O}(p)}~ -\sum_{j=1}^p \log d_j - \half (V^\top(\x - \m))^\top \diag(d^{-2}) (V^\top(\x - \m)). 
\]
Notice that the above optimization problem is invariant to the ordering of the entries of $d$.
As a consequence, without any loss of generality, we can assume that $d_1$ is the minimum value across all $d_j$. By Lemma~\ref{lemma:optimal_V}, the above optimization problem becomes
\[
\min_{\substack{a \in \R_+,~d \in \R_+^p,~ d \ge \sigma \\ a^2 + \sum_j (d_j - \sigma)^2 \le \eps_1^2\\ d_1 = \min\{d\} }} ~ \min_{\substack{\m \in \R^p \\ \|\m - \wh x\|_2^2 = a^2}} ~  -\sum_{j=1}^p \log d_j - \frac{1}{2d_1^2} \|\x - \m\|_2^2. 
\]
Using Lemma~\ref{lemma:quadratic}, we obtain the equivalent optimization problem
\[
\min_{\substack{a \in \R_+,~d \in \R_+^p, ~d \ge \sigma \\ a^2 + \sum_j (d_j - \sigma)^2 \le \eps_1^2\\ d_1 = \min\{d\} }} ~  -\sum_{j=1}^p \log d_j - \frac{1}{2d_1^2} (\|\x - \wh x\|_2 + a)^2.
\]
Notice that the constraint $\sigma \le d_1 = \min\{d\}$ implies that $p(d_1 - \sigma)^2 \le \sum_j (d_j - \sigma)^2$. As a consequence, any feasible value for $d_1$ should satisfy $a^2 + p (d_1 - \sigma)^2 \le \eps_1^2$. Separating the variable $d$ into two groups $d_1$ and $d_2,\dots, d_p$ leads to a two-layer optimization problem 
\begin{equation}
\label{eq:main_eq_min_ot}
\min_{\substack{a \in \R_+,~d_1 \in \R_+,~ d_1 \ge \sigma \\ a^2 + p(d_1 - \sigma)^2 \le \eps_1^2 }} ~ \left\{-\log d_1 - \frac{1}{2d_1^2} (\|\x - \wh x\|_2 + a)^2 + \Min{\substack{d_j \in \R_+, ~ d_j \ge d_1 ~\forall j=2, \ldots, p\\ \sum_{j=2}^p (d_j - \sigma)^2 \le \eps_1^2 - a^2 - (d_1 - \sigma)^2 }} -\sum_{j=2}^{p} \log d_j \right\}.
\end{equation}
Consider momentarily the minimization problem 
\begin{equation*}
\Min{\substack{d_j \in \R_+ \quad \forall j = 2, \ldots, p \\ \sum_{j=2}^p (d_j - \sigma)^2 \le \eps_1^2 - a^2 - (d_1 - \sigma)^2 }} -\sum_{j=2}^{p} \log d_j,
\end{equation*}
where the constraints $d_j \ge d_1$ have been intentionally omitted. Proposition~\ref{prop:max-e} asserts that this optimization problem has the optimal value
\[
-(p-1) \log \left(\sigma + \sqrt{\frac{\eps_1^2 - a^2 - (d_1 - \sigma)^2}{p - 1}}\right)
\]
at the optimal solution $d\opt_j=\sigma + \sqrt{\frac{\eps_1^2 - a^2 - (d_1 - \sigma)^2}{p-1}}$, which also by the outer constraint $a^2 + p(d_1 - \sigma)^2 \le \eps_1^2$ satisfies $d_j \ge d_1 ~ \forall j=2,\dots,p$. Thus it is indeed the optimal solution to the inner minimization problem in~\eqref{eq:main_eq_min_ot}. As a consequence, problem~\eqref{eq:main_eq_min_ot} is equivalent to 
\[
\min_{\substack{a \in \R_+,~d_1 \in \R_+, ~d_1 \ge \sigma\\ a^2 + p(d_1 - \sigma)^2 \le \eps_1^2}} ~ \left\{-\log d_1 - \frac{1}{2d_1^2} (\|\x - \wh x\|_2 + a)^2 - (p-1) \log \left(\sigma + \sqrt{\frac{\eps_1^2 - a^2 - (d_1 - \sigma)^2}{p - 1}}\right) \right\}.
\]
This completes the proof.
\end{proof}

\section{Auxiliary Results} 
\label{sec:app-aux}

The following preparatory results are necessary to prove Propositions~\ref{prop:max-Wass} and~\ref{prop:min-Wass}. 

\begin{lemma}[Eigenbasis solution] \label{lemma:optimal_V}
Let $E\in\R^{p\times p}$ be a diagonal matrix satisfying $E_{11}\le \cdots\le E_{pp}$. Then, for any $w \in \R^p$, we have
\begin{equation*}
    \Max{V \in \mathrm{O}(p)}~w^\top V E V^\top w = E_{pp} \|w\|_2^2.
\end{equation*}
\end{lemma}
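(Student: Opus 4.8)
The plan is to reduce the maximization over orthogonal matrices to an elementary maximization over a sphere via the change of variables $u = V^\top w$. First I would record the key fact that the map $V \mapsto V^\top w$ sends $\mathrm{O}(p)$ \emph{onto} the Euclidean sphere $\{u \in \R^p : \|u\|_2 = \|w\|_2\}$. One inclusion is immediate: $\|V^\top w\|_2 = \|w\|_2$ for every $V \in \mathrm{O}(p)$ since $V^\top V = I_p$. For the converse, if $w \neq 0$ and $\|u\|_2 = \|w\|_2$, one completes $w/\|w\|_2$ and $u/\|u\|_2$ to orthonormal bases of $\R^p$ and takes $V$ to be the orthogonal matrix carrying the first basis to the second, so that $V^\top w = u$. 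The degenerate case $w = 0$ is trivial, as both sides of the claimed identity vanish.

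Next I would substitute $u = V^\top w$ and use that $E$ is diagonal to rewrite the objective as $w^\top V E V^\top w = u^\top E u = \sum_{j=1}^p E_{jj}\, u_j^2$. By the surjectivity above, the problem $\Max{V \in \mathrm{O}(p)} w^\top V E V^\top w$ is therefore equal to $\max\bigl\{\sum_{j=1}^p E_{jj}\, u_j^2 : u \in \R^p,\ \sum_{j=1}^p u_j^2 = \|w\|_2^2\bigr\}$. Since $E_{jj} \le E_{pp}$ for every $j$ by hypothesis, any feasible $u$ satisfies $\sum_{j=1}^p E_{jj}\, u_j^2 \le E_{pp} \sum_{j=1}^p u_j^2 = E_{pp}\|w\|_2^2$, which gives the upper bound. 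For the reverse inequality, the choice $u = \|w\|_2\, e_p$ is feasible and attains the value $E_{pp}\|w\|_2^2$; by surjectivity this $u$ equals $V^\top w$ for some $V \in \mathrm{O}(p)$, so the bound is achieved. Combining the two yields the stated equality.

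There is no genuine obstacle here; the only step deserving a line of justification is the surjectivity of $V \mapsto V^\top w$ onto the sphere of radius $\|w\|_2$ (together with the $w = 0$ case), after which the argument is just the observation that a convex combination $\sum_j (u_j^2/\|w\|_2^2)\, E_{jj}$ of the diagonal entries is maximized by placing all the mass on the largest entry $E_{pp}$.
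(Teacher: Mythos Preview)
Your proof is correct and follows essentially the same approach as the paper: establish the upper bound $E_{pp}\|w\|_2^2$ from the ordering of the diagonal entries, then exhibit an orthogonal $V$ attaining it via the choice whose $p$-th column is $w/\|w\|_2$ (your $u=\|w\|_2 e_p$ is exactly this). The only cosmetic difference is that the paper phrases the upper bound as the matrix inequality $VEV^\top \preceq E_{pp} I_p$ and constructs the optimal $V$ directly, whereas you pass through the change of variables $u=V^\top w$ and the surjectivity onto the sphere before arriving at the same constructions.
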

\begin{proof}[Proof of Lemma~\ref{lemma:optimal_V}]
    The claim holds trivially when $w = 0$. Consider now any $w \in \R^p \backslash \{0\}$. Since $V E V^\top \preceq E_{pp}\cdot I_p$, we find
    \begin{align*}
        \Max{V \in \mathrm{O}(p)}~w^\top V E V^\top w \le \Max{V \in \mathrm{O}(p)}~w^\top V (E_{pp}\cdot I_p) V^\top w = E_{pp} \|w\|_2^2.
    \end{align*}
    On the other hand, taking $V\opt = [v_1\opt, \ldots, v_p\opt] \in \mathrm{O}(p)$ with $v_p\opt = \frac{w}{\|w\|_2}$, and using the orthogonality of the columns of $V\opt$, we have
    \[ w^\top V\opt E {V\opt}^\top w = E_{pp} \|w\|_2^2.  \]
    This shows that $V\opt$ is an optimal solution and completes the proof.
\end{proof}

\begin{lemma}[Quadratic optimization] \label{lemma:quadratic}
For any $\x \in \R^p$, $\wh x \in \R^p$ and $a \in \R_+$, the following assertions hold.
\begin{itemize}
    \item Convex quadratic minimization:
    \[
        \Min{\m \in \R^p: \| \m - \wh x \|_2^2 = a^2}~ \| \x - \m \|_2^2 =  (\| \x - \wh x \|_2 - a)^2,
    \]
    where the minimum is attained at $\m\opt = \frac{a}{\| \x - \wh x \|_2}\x + (1 - \frac{a}{\| \x - \wh x \|_2})\wh x$.
    \item Convex quadratic maximization:
    \[
        \Max{\m \in \R^p: \| \m - \wh x \|_2^2 = a^2}~ \| \x - \m \|_2^2 =  (\| \x - \wh x \|_2 + a)^2,
    \]
    where the maximum is attained at $\mu\opt = -\frac{a}{\|\x - \wh x\|_2}\x + (1 + \frac{a}{\|\x - \wh x\|_2})\wh x$.
\end{itemize}
\end{lemma}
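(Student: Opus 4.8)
The plan is to reduce both the minimization and the maximization to a single application of the Cauchy--Schwarz inequality after recentering the feasible sphere at the origin. First I would substitute $\nu \Let \m - \wh x$, so that the constraint $\| \m - \wh x \|_2^2 = a^2$ becomes $\|\nu\|_2 = a$, and expand
\[
    \| \x - \m \|_2^2 = \| (\x - \wh x) - \nu \|_2^2 = \| \x - \wh x \|_2^2 + a^2 - 2\inner{\x - \wh x}{\nu}.
\]
This turns the convex quadratic minimization over $\m$ into the linear \emph{maximization} of $\inner{\x - \wh x}{\nu}$ over the sphere $\|\nu\|_2 = a$, and the convex quadratic maximization over $\m$ into the linear \emph{minimization} of the same inner product, with the other two terms being constants independent of $\nu$.

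Next I would invoke Cauchy--Schwarz: for every $\nu$ with $\|\nu\|_2 = a$ one has $-a\|\x - \wh x\|_2 \le \inner{\x - \wh x}{\nu} \le a\|\x - \wh x\|_2$. Assuming $\x \ne \wh x$, the upper bound is attained exactly at $\nu\opt = a(\x - \wh x)/\|\x - \wh x\|_2$ and the lower bound exactly at $\nu\opt = -a(\x - \wh x)/\|\x - \wh x\|_2$, since equality in Cauchy--Schwarz forces $\nu$ to be a nonnegative (resp.\ nonpositive) multiple of $\x - \wh x$, and the norm constraint pins down the multiple. Substituting these extremal values of $\inner{\x - \wh x}{\nu}$ back into the expansion gives $\| \x - \wh x \|_2^2 + a^2 \mp 2a\|\x - \wh x\|_2 = (\|\x - \wh x\|_2 \mp a)^2$, which are precisely the claimed optimal values; undoing the translation via $\m\opt = \wh x + \nu\opt$ and collecting the coefficients of $\x$ and $\wh x$ reproduces the stated optimizers $\frac{a}{\|\x-\wh x\|_2}\x + (1-\frac{a}{\|\x-\wh x\|_2})\wh x$ and $-\frac{a}{\|\x-\wh x\|_2}\x + (1+\frac{a}{\|\x-\wh x\|_2})\wh x$.

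Finally I would dispose of the degenerate case $\x = \wh x$ separately: there $\| \x - \m \|_2^2 = a^2$ for \emph{every} feasible $\m$, which agrees with both formulas since $(\|\x - \wh x\|_2 \mp a)^2 = a^2$ when $\|\x - \wh x\|_2 = 0$ (the explicit optimizer expressions are simply vacuous in this case, every feasible point being optimal). I do not expect a genuine obstacle; the only steps needing a little care are verifying that the two Cauchy--Schwarz equality cases yield exactly the two antipodal points and checking that the rewriting of $\wh x + \nu\opt$ in the form $t\x + (1-t)\wh x$ produces the stated $t = \pm a/\|\x-\wh x\|_2$.
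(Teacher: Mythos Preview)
Your argument is correct. The paper's proof is equally elementary but proceeds via the triangle inequality rather than Cauchy--Schwarz: it bounds $\|\x - \m\|_2$ directly by $|\,\|\x - \wh x\|_2 - \|\m - \wh x\|_2\,|$ from below and by $\|\x - \wh x\|_2 + \|\m - \wh x\|_2$ from above, and then exhibits the same attaining points you found. Your route---expand the squared norm and bound the cross term $\inner{\x-\wh x}{\nu}$ by Cauchy--Schwarz---is the natural ``inner-product'' counterpart of the same idea, and it has the minor advantage that the equality case of Cauchy--Schwarz immediately identifies the optimizers without a separate verification step. You also treat the degenerate case $\x = \wh x$ explicitly, which the paper's proof leaves implicit; both handle it correctly. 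Either approach is a one-line application of a standard inequality, so there is no substantive gap between them.
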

The results in Lemma~\ref{lemma:quadratic} are dispersed in the literature. An elementary proof is provided here for completeness.
\begin{proof}[Proof of Lemma~\ref{lemma:quadratic}]
By the triangle inequality, for any $\m$ such that $\| \m - \wh x \|_2 = a$, we have
\[ \| \x - \m \|_2 \ge \left| \| \x - \wh x \| - \| \m - \wh x \| 
\right| = \left| \| \x - \wh x \| - a \right|, \]
where the lower bound can be attained by taking $\m = \frac{a}{\| \x - \wh x \|_2}\x + (1 - \frac{a}{\| \x - \wh x \|_2})\wh x$. Therefore, 
\[\Min{\m \in \R^p: \| \m - \wh x \|_2^2 = a^2}~ \| \x - \m \|_2^2 =  (\| \x - \wh x \|_2 - a)^2\]
Similarly, by the triangle inequality we have
\[ \| \x - \m \|_2 \le \| \x - \wh x \| + \| \wh x - \m \| = \| \x - \wh x \| + a, \]
and the upper bound can be attained by $\mu = -\frac{a}{\|\x - \wh x \|_2}\x + (1 + \frac{a}{\|\x - \wh x \|_2})\wh x$. This completes the proof.
\end{proof}

\begin{proposition}[Logarithm maximization] \label{prop:max-e}
For any $s ,\sigma\ge 0$ and positive integer $k$, we have
\begin{equation}\label{opt:sum_log_max_subproblem}
    k \log\left( \sqrt{\frac{s}{k}} + \sigma \right) = \left\{
    \begin{array}{cl}
        \displaystyle\max_{e\in\R_+^{k}} & \displaystyle\sum_{j=1}^{k} \log e_j  \\
        \st & \displaystyle\sum_{j = 1}^{k} (\sigma - e_j)^2 \le s.
    \end{array}
    \right.
\end{equation}
Moreover, the optimal solution $e\opt$ satisfies $e\opt_j = \sqrt{\frac{s}{k}} + \sigma$ for any  $j = 1, \ldots, k$.
\end{proposition}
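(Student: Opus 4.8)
The plan is to prove the identity by a pair of matching bounds, using only Jensen's inequality and Cauchy--Schwarz, and then to read off the optimizer from the equality conditions. First I would check that the candidate point $e\opt$ with $e\opt_j = \sqrt{s/k} + \sigma$ for every $j$ is feasible: since $s, \sigma \ge 0$ we have $e\opt_j \ge 0$, and $\sum_{j=1}^{k}(\sigma - e\opt_j)^2 = \sum_{j=1}^{k} s/k = s$, so the constraint holds (with equality). Its objective value is $\sum_{j=1}^k \log e\opt_j = k\log(\sqrt{s/k} + \sigma)$, which already shows that the maximum is at least the claimed right-hand side.

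For the reverse inequality, fix an arbitrary feasible $e \in \R_+^{k}$. If $e_j = 0$ for some $j$ then $\sum_{j} \log e_j = -\infty$ and there is nothing to prove, so assume $e_j > 0$ for all $j$. By Jensen's inequality applied to the concave function $\log$,
\[
    \frac{1}{k}\sum_{j=1}^{k} \log e_j \;\le\; \log\Big( \frac{1}{k}\sum_{j=1}^{k} e_j \Big).
\]
To control the arithmetic mean I would invoke Cauchy--Schwarz together with feasibility:
\[
    \sum_{j=1}^{k} (e_j - \sigma) \;\le\; \sum_{j=1}^{k} |e_j - \sigma| \;\le\; \sqrt{k}\Big(\sum_{j=1}^{k}(e_j - \sigma)^2\Big)^{1/2} \;\le\; \sqrt{ks},
\]
so that $\frac{1}{k}\sum_{j=1}^{k} e_j \le \sigma + \sqrt{s/k}$. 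Since $\log$ is nondecreasing, combining the two displays gives $\sum_{j=1}^{k} \log e_j \le k\log(\sqrt{s/k} + \sigma)$, which is the desired upper bound; together with the first step this establishes the equality.

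For the claim about the optimal solution, I would trace the equality cases: equality in Jensen for the strictly concave $\log$ forces $e_1 = \cdots = e_k$, and then the constraint together with the monotonicity of $\log$ forces the common value to equal $\sqrt{s/k} + \sigma$; hence $e\opt$ is the unique maximizer whenever the optimal value is finite, the degenerate case $s = \sigma = 0$ leaving $e = 0$ as the only feasible point, for which the statement is immediate. I do not expect a genuine obstacle here: the only points requiring care are the sign in the Cauchy--Schwarz step, where $e_j - \sigma$ need not be nonnegative (hence the intermediate absolute value), and the boundary behaviour as some $e_j \downarrow 0$, both of which are disposed of above.
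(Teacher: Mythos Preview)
Your proof is correct and clean. It differs from the paper's argument in its overall architecture: the paper first argues by an exchange/averaging contradiction that any optimizer must have all coordinates equal (replacing two unequal entries by their common average stays feasible by convexity of $(x-\sigma)^2$ and strictly improves the objective by strict concavity of $\log$), and only then solves the resulting one-dimensional problem. You instead prove matching upper and lower bounds directly, combining Jensen with a Cauchy--Schwarz estimate on $\sum_j(e_j-\sigma)$ to bound the arithmetic mean, and recover the form of the optimizer only afterwards from the equality cases. Your route is arguably tidier: it avoids the paper's slightly informal appeal to ``continuity of the objective and constraint functions'' to conclude that the constraint is active at the optimum, since in your argument the tightness falls out automatically from the equality analysis. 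The paper's exchange argument, on the other hand, makes the symmetry of the optimizer the primary structural fact and is a template that generalizes readily to other symmetric concave objectives over symmetric convex constraints.
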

\begin{proof}[Proof of Proposition~\ref{prop:max-e}]
Let $e\opt \in \R^{k}_+$ be an optimal solution to the maximization problem~\eqref{opt:sum_log_max_subproblem}. Suppose there exist two indices $m$ and $n$ such that $e\opt_m \neq e\opt_n$. Consider $e'$ defined by
\[
e'_j = 
\begin{cases}
\half(e\opt_m + e\opt_n), & \text{if } j \in \{m,n\},\\
e\opt_j, &\text{otherwise}.
\end{cases}
\]
By the convexity of the function $x\mapsto (x-\sigma)^2$,
\[  \left(e'_m - \sigma\right)^2 + \left(e'_n - \sigma\right)^2= 2\left(\frac{e\opt_m + e\opt_n}{2} - \sigma \right)^2 \le (e\opt_m - \sigma)^2 + (e\opt_n - \sigma)^2 , \]
which implies that $e'$ is a feasible solution to problem~\eqref{opt:sum_log_max_subproblem}.
Furthermore, since $e\opt_m \neq e\opt_n$, by the concavity of the function $x\mapsto \log x$, we have that 
\[\log e\opt_m + \log e\opt_n < 2\log \left(\frac{e\opt_m + e\opt_n}{2}\right) = \log e'_m + \log e'_n,\]
which violates the optimality of $e\opt$. Therefore, any optimal solution $e\opt$ must have all entries identical. Using this, we get from the constraint that
\[ |e\opt_j - \sigma| \le \sqrt{\frac{s}{k}}\quad\forall j = 1,\dots, k. \] 
By continuity of the objective and constraint functions, we must have
\[ |e\opt_j - \sigma| = \sqrt{\frac{s}{k}}\quad\forall j = 1,\dots, k. \] 
Since the objective function is increasing in $e\opt_j$, the optimal solution is given by
\[ e\opt_j = \sigma + \sqrt{\frac{s}{k}} \quad\forall j = 1,\dots, k.\]
The optimal value can then be obtained by direct computation. This completes the proof.
\end{proof}

\section{First-Order Algorithms} \label{sec:app:foa}

\subsection{Optimistic Likelihood Problem} \label{sec:app:opp}

For the optimistic likelihood problem, Theorem~\ref{thm:max} reduces the task to solving the 2-dimensional problem
 \[
    \min_{\substack{a \in \R_+,~d_p \in [\sigma, +\infty) \\ a^2 + (d_p - \sigma)^2 \le \eps_0^2 }} ~ \log d_p + \frac{(\|\x - \wh x_i\|_2 - a)^2}{2d_p^2}  + (p-1) \log \sigma.
    \]
By letting 
\[
    d_p = v_2 + \sigma, \quad \text{and} \quad a = v_1,
\]
we can obtain the equivalent form
\be \label{eq:newprob}
    \min_{\substack{v_1, v_2 \ge 0 \\ v_1^2 + v_2^2 \le \eps_0^2 }} ~ F(v),
\ee
where the objective function is given by
\[
F(v) = \log (v_2 + \sigma) + \frac{(\|\x - \wh x_i\|_2 - v_1)^2}{2(v_2 + \sigma)^2}  + (p-1) \log \sigma.
\]
If we denote by $\mathcal V = \{ v \in \R^2: v_1, v_2 \ge 0, v_1^2 + v_2^2 \le \eps_0^2 \}$ the feasible region of the above minimization problem, then the projection $\Proj_{\mathcal{V}}(v)$ can be computed in closed-form via
\begin{equation*}
    \Proj_{\mathcal{V}}(v) = \begin{cases}
    v, &\text{if } v_1, v_2\ge 0, v_1^2 + v_2^2 \le \eps_0^2,\\
    \frac{\eps_0}{\|v\|_2}v, & \text{if } v_1, v_2\ge 0, v_1^2 + v_2^2 > \eps_0^2,\\
    (0,\eps_0)^\top, & \text{if } v_1 < 0, v_2 > \eps_0,\\
    (0,v_2)^\top, & \text{if } v_1 < 0, 0\le v_2 \le \eps_0,\\
    (\eps_0, 0)^\top, & \text{if } v_1 > \eps_0, v_2 < 0,\\
    (v_1,0)^\top, & \text{if } 0\le v_1 \le \eps_0, v_2 < 0,\\
    (0,0)^\top, &\text{if } v_1, v_2<0.
    \end{cases}
\end{equation*}
Algorithm~\ref{alg:pgd} is a projected gradient descent routine to solve problem~\eqref{eq:newprob}. The convergence guarantee for Algorithm~\ref{alg:pgd} follows from \citet[Theorem~10.15]{beck2017first}.

\begin{algorithm}[h]
	\caption{Projected Gradient Descent Algorithm with Backtracking Line-Search}
	\label{alg:pgd}
	\begin{algorithmic}
		\STATE {\bfseries Algorithm parameters:} Line search parameters $\theta\in (0,1)$, $\beta>0$ 
		\STATE {\bfseries Initialization:} Set $ v^0 \leftarrow 0$
        \FOR{$t = 0, 1, \ldots$}
            \STATE Find the smallest integer $k\ge 0$ such that 
            \begin{align*}
                & F\left( \Proj_{\mathcal{V}} (v^t - \theta^k \beta \nabla F(v^t)) \right) \le F(v^t )  - \frac{1}{2 \theta^k \beta} \| v^t - \Proj_{\mathcal{V}} (v^t - \theta^k \beta \nabla F(v^t)) \|_2^2
            \end{align*}
            \STATE Set $s^t = \theta^k \beta$ and set $v^{t+1} = \Proj_{\mc V}(u^t - s^t \nabla F(v^t))$.
        \ENDFOR
	\end{algorithmic}
\end{algorithm}

\subsection{Pessimistic Likelihood Problem} \label{sec:app:pes}

For the pessimistic likelihood problem, Theorem~\ref{thm:min} reduces the task to solving the 2-dimensional problem
\[
\min_{\substack{a \in \R_+,~d_1 \in [ \sigma, +\infty) \\ a^2 + p(d_1 - \sigma)^2 \le \eps_1^2}} ~ \left\{-\log d_1 - \frac{1}{2d_1^2} (\|\x - \wh x_i\|_2 + a)^2 - (p-1) \log \left(\sigma + \sqrt{\frac{\eps_1^2 - a^2 - (d_1 - \sigma)^2}{p - 1}}\right) \right\}.
\]
Note that the gradient of the objective function is a non-Lipschitz function. Worse still, the gradient is even undefined on at the feasible point $(d_1, a) = (\sigma, \eps_1)$. These properties induce numerical issues for the optimization algorithm. Therefore, we solve the following perturbed problem instead:
\begin{equation}\label{opt:max-KL-perturbed}
    \min_{\substack{a \in \R_+,~d_1 \in [\sigma, +\infty) \\ a^2 + p(d_1 - \sigma)^2 \le \eps_1^2}} ~ \left\{-\log d_1 - \frac{1}{2d_1^2} (\|\x - \msa\|_2 + a)^2 - (p-1) \log \left(\sigma + \sqrt{\frac{\zeta + \eps_1^2 - a^2 - (d_1 - \sigma)^2}{p - 1}}\right) \right\},
\end{equation}
for some small $\zeta >0$. By \citet[Proposition~4.4]{ref:bonnans2013perturbation}, the optimal value of problem~\eqref{opt:max-KL-perturbed} is continuous in $\zeta$ and the optimal solution set is upper semi-continuous in $\zeta$ as a set-valued mapping, see \citet[Section~4.1]{ref:bonnans2013perturbation}.

We now derive a projected gradient descent algorithm with backtracking line search for solving problem~\eqref{opt:max-KL-perturbed}. First, by letting
\[d_1 = u_2 + \sigma, \quad \text{and} \quad a = \sqrt{p}u_1,\] 
we can equivalently transform problem~\eqref{opt:max-KL-perturbed} to the following one:
\begin{equation}\label{opt:max-KL-u}
    \min_{\substack{u_1,u_2\ge 0\\ u_1^2 + u_2^2 \le (\eps_1/\sqrt{p})^2}} ~ F(u), 
\end{equation}
where the objective function is given by
\[
F(u) = -\log (u_2 + \sigma) - \frac{1}{2(u_2 + \sigma)^2} (\|\x - \wh x_i\|_2 + \sqrt{p} u_1 )^2 - (p-1) \log \left(\sigma + \sqrt{\frac{\zeta + \eps_1^2 - p u_1^2 - u_2^2}{p - 1}}\right) .
\]
The upshot of problem~\eqref{opt:max-KL-u} is that the feasible region is the intersection of the non-negative orthant with a circular disk of radius $\eps_1/\sqrt{p}$ centered at the origin. As we will see below, this enables easy computation of the projection and linear optimization oracle. 
Indeed, denoting by $\mathcal{U} = \{ u\in\R^2: u_1, u_2\ge 0, u_1^2 + u_2^2 \le (\eps_1/\sqrt{p})^2 \}$ the feasible region of problem~\eqref{opt:max-KL-u}, the projection $\Proj_{\mathcal{U}}(u)$ can be computed in closed-form via
\begin{equation*}
    \Proj_{\mathcal{U}}(u) = \begin{cases}
    u, &\text{if } u_1, u_2\ge 0, u_1^2 + u_2^2 \le (\eps_1/\sqrt{p})^2,\\
    \frac{(\eps_1 / \sqrt{p})}{\|u\|_2}u, & \text{if } u_1, u_2\ge 0, u_1^2 + u_2^2 > (\eps_1/\sqrt{p})^2,\\
    (0,\frac{\eps_1}{\sqrt{p}})^\top, & \text{if } u_1 < 0, u_2 > \frac{\eps_1}{\sqrt{p}},\\
    (0,u_2)^\top, & \text{if } u_1 < 0, 0\le u_2 \le \frac{\eps_1}{\sqrt{p}},\\
    (\frac{\eps_1}{\sqrt{p}}, 0)^\top, & \text{if } u_1 > \frac{\eps_1}{\sqrt{p}}, u_2 < 0,\\
    (u_1,0)^\top, & \text{if } 0\le u_1 \le \frac{\eps_1}{\sqrt{p}}, u_2 < 0,\\
    (0,0)^\top, &\text{if } u_1,u_2<0.
    \end{cases}
\end{equation*}
A projected gradient descent algorithm can now be employed to solve problem~\eqref{opt:max-KL-u}.

\section{Recovery of the Adversarial Distribution} \label{sec:recovery}

It is often instructive to recover and analyze the optimal distribution that maximizes the posterior probability odds ratio, or more directly, the likelihood ratio in~\eqref{eq:dro}. Equivalent, it suffices to characterize the distribution $\QQ_0\opt$ that maximizes~\eqref{eq:likelihood-max}, and the distribution $\QQ_1\opt$ that minimizes~\eqref{eq:likelihood-min}.

\begin{lemma}[Likelihood maximizer] \label{lemma:max-dist}
    For each $i \in \mc I_0$, let $(a_i\opt, d_{pi}\opt)$ be the optimal solution of the following two-dimensional optimization problem
     \[
    \min_{\substack{a \in \R_+,~d_p \in [\sigma, +\infty) \\ a^2 + (d_p - \sigma)^2 \le \eps_0^2 }} ~ \log d_p + \frac{(\|\x - \wh x_i\|_2 - a)^2}{2d_p^2}  + (p-1) \log \sigma.
    \]
    Then, the maximizero $\QQ_0\opt$ of problem~\eqref{eq:likelihood-max} is a Gaussian mixture with $N_0$ components, and for $i \in \mc I_0$, the $i$-th components has mean
    \[
        \m_i\opt =  \frac{a_i\opt}{\|x - \wh x_i \|_2 } x + \left(1 - \frac{a_i\opt}{\|x - \wh x_i \|_2 }\right) \wh x_i, 
    \]
    and covariance matrix 
    \[ \cov_i\opt = V_i\opt \diag( \sigma, \dots, \sigma, d_{pi}\opt )^2 (V_i\opt)^\top,\]
    where $V_i\opt$ is any orthogonal matrix with the $p$-th column given by $\frac{x - \m_i\opt}{\|x - \m_i\opt\|_2}$.
\end{lemma}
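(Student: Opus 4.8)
The plan is to retrace the chain of equivalences used to prove Lemma~\ref{lemma:separability} and Proposition~\ref{prop:max-Wass}, this time keeping track of the optimal \emph{solutions} rather than only the optimal values. First I would invoke Lemma~\ref{lemma:separability}: any maximizer $\QQ_0\opt$ of~\eqref{eq:likelihood-max} may be taken to be the uniform mixture $\QQ_0\opt = \frac{1}{N_0}\sum_{i\in\mc I_0}\mc N(\m_i\opt,\cov_i\opt)$, where each pair $(\m_i\opt,\cov_i\opt)$ solves the decoupled subproblem
\[
    \max\big\{\, f(\x\,|\,\m_i,\cov_i) : (\m_i,\cov_i)\in\R^p\times\PDsigma^p,\ c((\m_i,\cov_i),(\wh x_i,\sigma^2 I))\le\eps_0 \,\big\}.
\]
It therefore suffices to identify the maximizer of this single subproblem, which is the content of the statement restricted to the index $i$.

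Second, I would unwind the reduction inside the proof of Proposition~\ref{prop:max-Wass}. Reparameterizing $\cov_i = V\diag(d^2)V^\top$ with $V\in\mathrm{O}(p)$ and $d\ge\sigma$, and setting $a=\|\m_i-\wh x_i\|_2$, the subproblem becomes the minimization of $\sum_{j=1}^p\log d_j + \frac12 (V^\top(\x-\m_i))^\top\diag(d^{-2})(V^\top(\x-\m_i))$ over $\{a^2+\sum_j(d_j-\sigma)^2\le\eps_0^2,\ d\ge\sigma\}$. Lemma~\ref{lemma:optimal_V} (in its minimization form, obtained by negating $\diag(d^{-2})$) shows that, for fixed $d$ and $\m_i$, the optimal orthogonal matrix aligns the eigenvector carrying the \emph{largest} diagonal entry $d_p$ with the direction $\x-\m_i$; hence the $p$-th column of $V_i\opt$ equals $\frac{\x-\m_i\opt}{\|\x-\m_i\opt\|_2}$. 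Lemma~\ref{lemma:quadratic} (convex quadratic minimization) then pins the optimal mean under $\|\m_i-\wh x_i\|_2=a$ to $\m_i\opt=\frac{a\opt}{\|\x-\wh x_i\|_2}\x+\big(1-\frac{a\opt}{\|\x-\wh x_i\|_2}\big)\wh x_i$, and the inner logarithm minimization over $d_1,\dots,d_{p-1}$ in~\eqref{eq:main_eq_max_ot} is solved by $d_j\opt=\sigma$ for $j<p$. Substituting the two-dimensional optimizer $(a_i\opt,d_{pi}\opt)$ from the statement gives precisely $\m_i\opt$ together with $\cov_i\opt=V_i\opt\diag(\sigma,\dots,\sigma,d_{pi}\opt)^2(V_i\opt)^\top$; since every step above is a value-preserving equivalence, the mixture assembled from these component-wise optimizers attains the maximum in~\eqref{eq:likelihood-max}, and it has $N_0$ components (some possibly coinciding).

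The step I expect to need the most care is the handling of degenerate configurations, where the displayed formulas are $0/0$. If $\x=\wh x_i$ then necessarily $a_i\opt=0$ and $\m_i\opt=\wh x_i$; and if $a_i\opt=\|\x-\wh x_i\|_2$ then $\m_i\opt=\x$. In either case $\x-\m_i\opt=0$, the quadratic term $\frac12(V^\top(\x-\m_i\opt))^\top\diag(d^{-2})(V^\top(\x-\m_i\opt))$ vanishes for every $V$, and hence \emph{any} orthogonal $V_i\opt$ is optimal — which is exactly what the clause ``any orthogonal matrix with the $p$-th column given by $\frac{\x-\m_i\opt}{\|\x-\m_i\opt\|_2}$'' is designed to permit, the column constraint being read as vacuous there. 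I would also point out that even in the non-degenerate case $V_i\opt$ is not unique — the first $p-1$ columns form an arbitrary orthonormal basis of the orthogonal complement, and the $p$-th column is determined only up to sign — so the ``any orthogonal matrix'' phrasing is both necessary and sufficient to make the reconstruction of $\QQ_0\opt$ well-posed.
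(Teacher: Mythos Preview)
Your proposal is correct and follows exactly the approach of the paper, whose proof consists of the single sentence ``The result follows directly by inspecting the proofs of Proposition~\ref{prop:max-Wass}, Lemma~\ref{lemma:optimal_V} and Lemma~\ref{lemma:quadratic}.'' You have simply carried out that inspection in detail---tracking the optimizers through the separability lemma, the eigenbasis alignment, the quadratic minimization, and the inner $d_j=\sigma$ step---and your added discussion of the degenerate cases $\x=\wh x_i$ and $a_i\opt=\|\x-\wh x_i\|_2$ is a welcome clarification that the paper leaves implicit.
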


\begin{proof}[Proof of Lemma~\ref{lemma:max-dist}]
The result follows directly by inspecting the proofs of Proposition~\ref{prop:max-Wass}, Lemma~\ref{lemma:optimal_V} and Lemma~\ref{lemma:quadratic}.
\end{proof}

\begin{lemma}[Likelihood minimizer] \label{lemma:min-dist}
    For each $i \in \mc I_1$, let $(a_i\opt, d_{1i}\opt)$ be the optimal solution of the following two-dimensional optimization problem
    \begin{align*}
    &\min_{\substack{a \in \R_+,~d_1 \in [\sigma, +\infty) \\ a^2 + p(d_1 - \sigma)^2 \le \eps_1^2}} ~ \left\{-\log d_1 - \frac{(\|\x - \wh x_i\|_2 + a)^2}{2d_1^2} - (p-1) \log \left(\sigma + \sqrt{\frac{\eps_1^2 - a^2 - (d_1 - \sigma)^2}{p - 1}}\right) \right\}.
    \end{align*}
    Then, the minimizer $\QQ_1\opt$ of problem~\eqref{eq:likelihood-min} is a Gaussian mixture with $N_1$ components, and for $i \in \mc I_1$, the $i$-th components has mean
    \[
        \m_i\opt =  - \frac{a_i\opt}{\|x - \wh x_i \|_2 } x + \left(1 + \frac{a_i\opt}{\|x - \wh x_i \|_2 }\right) \wh x_i, 
    \]
    and covariance matrix 
    \[ \cov_i\opt = V_i\opt \diag\left( d_{1i}\opt, \sigma + \sqrt{\frac{\eps_1^2 - {a_i\opt}^2 - (d_{1i}\opt - \sigma)^2}{p-1}}, \dots, \sigma + \sqrt{\frac{\eps_1^2 - {a_i\opt}^2 - (d_{1i}\opt - \sigma)^2}{p-1}} \right)^2 (V_i\opt)^\top,\]
    where $V_i\opt$ is any orthogonal matrix with the $1$st column given by $\frac{x - \m_i\opt}{\|x - \m_i\opt\|_2}$.
\end{lemma}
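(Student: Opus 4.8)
The plan is to run the chain of reductions behind Lemma~\ref{lemma:separability} and Proposition~\ref{prop:min-Wass} \emph{in reverse}, reading off the optimal primal variables at each stage rather than merely the optimal value. By Lemma~\ref{lemma:separability} applied with subscript $y = 1$, problem~\eqref{eq:likelihood-min} admits a minimizer $\QQ_1\opt$ that is a Gaussian mixture with exactly $N_1$ components, the $i$-th component being $\mc N(\m_i\opt, \cov_i\opt)$ where $(\m_i\opt, \cov_i\opt)$ solves the separated subproblem $\min\{ f(\x \mid \m_i, \cov_i) : (\m_i, \cov_i) \in \R^p \times \PDsigma^p,\ c((\m_i, \cov_i), (\wh x_i, \sigma^2 I)) \le \eps_1 \}$. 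Hence it suffices to identify, for each $i \in \mc I_1$, the minimizer of this single subproblem and to check that its pieces match the stated formulas.

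Next I would revisit the reparameterizations used in the proof of Proposition~\ref{prop:min-Wass}: writing $\cov_i = V \diag(d^2) V^\top$ and setting $a = \|\m_i - \wh x_i\|_2$ converts the subproblem into a nested minimization over $a$, then over $\m_i$ on the sphere $\|\m_i - \wh x_i\|_2 = a$, then over $V \in \mathrm{O}(p)$, then over the eigenvalue vector $d \ge \sigma$, and each inner layer is already resolved by a lemma whose proof produces an explicit optimizer. (i) The optimization over $\m_i$ is the convex-quadratic \emph{maximization} case of Lemma~\ref{lemma:quadratic} (maximization because of the minus sign multiplying the quadratic form), which returns $\m_i\opt = -\frac{a_i\opt}{\|x - \wh x_i\|_2} x + \big(1 + \frac{a_i\opt}{\|x - \wh x_i\|_2}\big) \wh x_i$, exactly the claimed mean. (ii) The optimization over $V$ is Lemma~\ref{lemma:optimal_V} applied to $\diag(d^{-2})$, whose largest diagonal entry is $1/d_1^2$ with $d_1 = \min_j d_j$; the lemma's optimizer aligns the column of $V$ associated with $d_1$ with the unit vector $(x - \m_i\opt)/\|x - \m_i\opt\|_2$ and leaves the remaining $p-1$ columns free up to orthonormality, which is precisely the stated $V_i\opt$ (the direction, and hence $V_i\opt$, is pinned down only when $x \ne \wh x_i$; otherwise any orthogonal matrix is optimal). (iii) The optimization over the non-minimal eigenvalues $d_2, \ldots, d_p$ is the logarithm-maximization of Proposition~\ref{prop:max-e} with $k = p - 1$ and $s = \eps_1^2 - (a_i\opt)^2 - (d_{1i}\opt - \sigma)^2$, whose optimizer has every coordinate equal to $\sigma + \sqrt{(\eps_1^2 - (a_i\opt)^2 - (d_{1i}\opt - \sigma)^2)/(p-1)}$, while the minimal eigenvalue stays at $d_{1i}\opt$, the value returned by the two-dimensional problem in the statement. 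Assembling $\cov_i\opt = V_i\opt \diag\big(d_{1i}\opt, \sigma + \sqrt{\cdots}, \ldots, \sigma + \sqrt{\cdots}\big)^2 (V_i\opt)^\top$ reproduces the claimed covariance, and feeding $(\m_i\opt, \cov_i\opt)$ back through the separability reduction shows $\QQ_1\opt$ has the asserted mixture form.

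The main obstacle is purely the bookkeeping of the ordering convention. One must consistently designate $d_1$ as the \emph{minimal} eigenvalue so that it is \emph{its} eigenvector --- and no other --- that Lemma~\ref{lemma:optimal_V} forces into the direction $x - \m_i\opt$; and one must then verify that the common value $\sigma + \sqrt{(\eps_1^2 - (a_i\opt)^2 - (d_{1i}\opt - \sigma)^2)/(p-1)}$ assigned to $d_2, \ldots, d_p$ is indeed $\ge d_{1i}\opt$, so that the reconstructed matrix actually respects the convention under which it was derived. That inequality follows at once from the outer constraint $a^2 + p(d_1 - \sigma)^2 \le \eps_1^2$ together with $d_1 \ge \sigma$, exactly as already used in the proof of Proposition~\ref{prop:min-Wass}, so nothing new is required; as with Lemma~\ref{lemma:max-dist}, the whole argument reduces to inspecting the proofs of Proposition~\ref{prop:min-Wass}, Lemma~\ref{lemma:optimal_V}, Lemma~\ref{lemma:quadratic} and Proposition~\ref{prop:max-e}.
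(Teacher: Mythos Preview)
Your proposal is correct and follows exactly the route the paper takes: the paper's own proof is a one-line reference to ``inspecting the proofs of Proposition~\ref{prop:min-Wass}, Lemma~\ref{lemma:optimal_V} and Lemma~\ref{lemma:quadratic}'', and you have carried out precisely that inspection, extracting the optimizer at each layer of the nested minimization. Your explicit invocation of Proposition~\ref{prop:max-e} for the non-minimal eigenvalues, and your verification that the common value $\sigma + \sqrt{(\eps_1^2 - (a_i\opt)^2 - (d_{1i}\opt - \sigma)^2)/(p-1)} \ge d_{1i}\opt$, are already implicit in the proof of Proposition~\ref{prop:min-Wass}, so your write-up is simply a more detailed version of the same argument.
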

\begin{proof}[Proof of Lemma~\ref{lemma:min-dist}]
The result follows directly by inspecting the proofs of Proposition~\ref{prop:min-Wass}, Lemma~\ref{lemma:optimal_V} and Lemma~\ref{lemma:quadratic}.
\end{proof}

\begin{figure}
    \centering
    \includegraphics[width=0.6\linewidth]{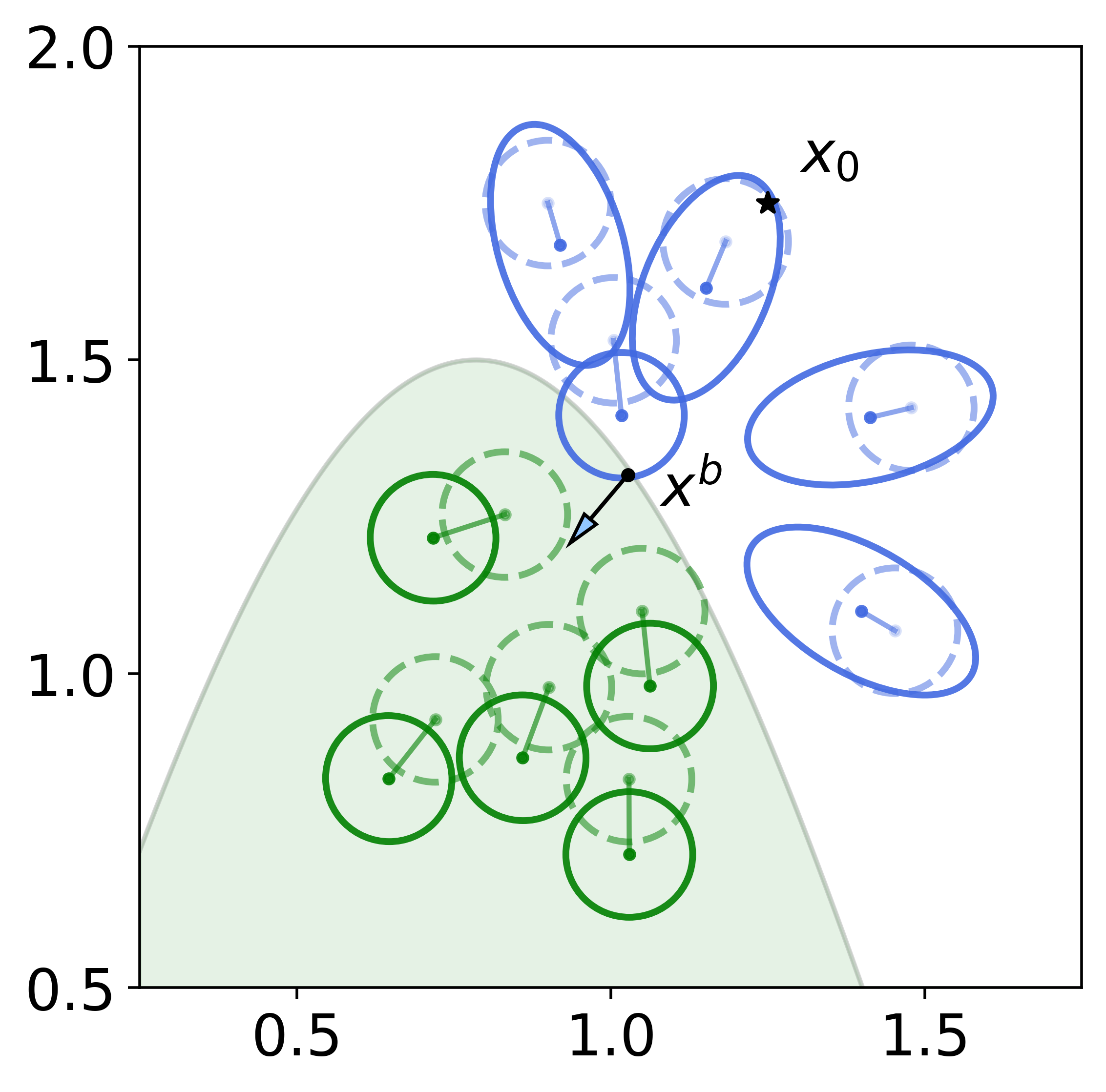}
    \caption{Visualization of the worst-case distributions on a toy dataset, color codes are similar to Figure~\ref{fig:illustration}. The dashed, opaque dots and circles represent the isotropic Gaussian around each data sample. The solid dots and circles represent the worst-case distributions corresponding to the boundary point $x^b$. For blue (unfavorably predicted) samples, the worst-case distribution is formed by perturbing the distribution towards $x^b$ -- which leads to maximizing the posterior probability of unfavorable prediction. For green (favorably predicted) samples, the worst-case distribution is formed by perturbing the distribution away from $x^b$ -- which leads to minimizing the posterior probability of favorable prediction. These worst-case distributions will maximize the posterior probability odds ratio.}
    \label{fig:illus3}
\end{figure}

\end{document}